\pgfplotsset{compat=newest}
\newtheorem{theorem}{Theorem}
\newtheorem{prop}{Proposition}
\newtheorem{assumption}{Assumption}
\newtheorem*{theorem*}{Theorem}
\newtheorem*{prop*}{Proposition}
\newcommand{\otc}[1]{\textcolor{black}{#1}}
\newcommand{\changede}[1]{\textcolor{black}{#1}}
\newcommand{\changebr}[1]{\textcolor{black}{#1}}
\newcommand{\changelm}[1]{\textcolor{black}{#1}}
\newcommand{\changebrtwo}[1]{\textcolor{black}{#1}}
\newcommand{\changee}[1]{\textcolor{black}{#1}}
 \newcommand{\lints}{\textsc{LinTS}\xspace}
\newcommand{\expfour}{\textsc{Exp}$4$\xspace}
\newcommand{\expthree}{\textsc{Exp}$3$\xspace}
\newcommand{\linucb}{\textsc{LinUCB}\xspace}
\newcommand{\ucb}{\textsc{UCB}\xspace}
\newcommand{\epsgreedy}{\textsc{$\varepsilon$-greedy}\xspace}
\newcommand{\todoeout}[1]{\todo[color=purple]{\scriptsize #1}}
\newcommand{\todol}[1]{\todo[color=GreenYellow]{\small Laurent: #1}}
\newcommand{\interior}[1]{%
  {\kern0pt#1}^{\mathrm{o}}%
}
\newcommand{\cmmnt}[1]{\ignorespaces}
\newcommand{\bookboxx}[1]{\small
\par\medskip\noindent
\framebox[0.99\textwidth]{
\begin{minipage}{0.97\dimexpr\textwidth-\parindent\relax} {#1} \end{minipage} } \par\medskip }
\DeclareMathOperator*{\argmax}{argmax}
\newcommand{\wt}[1]{\widetilde{#1}}
\newcommand{\wh}[1]{\widehat{#1}}
\let\svthefootnote\thefootnote
\newcommand\blankfootnote[1]{%
  \let\thefootnote\relax\footnotetext{#1}%
  \let\thefootnote\svthefootnote%
}
\DeclareRobustCommand{\eg}{e.g.,\@\xspace}
\DeclareRobustCommand{\ie}{i.e.,\@\xspace}
\DeclareRobustCommand{\wrt}{w.r.t.\@\xspace}
\title{Adversarial Attacks on Linear Contextual Bandits }
\author{%
  Evrard Garcelon\textsuperscript{$\star$}\\
  Facebook AI Research\\
  \texttt{evrard@fb.com}\\
   \And
   Baptiste Rozi{\`{e}}re\textsuperscript{$\star$} \\
   Facebook AI Research \\
   \texttt{broz@fb.com} \\
   \And
   Laurent Meunier\textsuperscript{$\star$} \\
   Facebook AI Research \\
   \texttt{laurentmeunier@fb.com} \\
   \And
   Jean Tarbouriech \\
   Facebook AI Research \\
   \texttt{jtarbouriech@fb.com} \\
   \And
   Olivier Teytaud \\
   Facebook AI Research\\
   \texttt{oteytaud@fb.com} \\
   \And
   Alessandro Lazaric \\
   Facebook AI Research\\
   \texttt{lazaric@fb.com} \\
   \And
   Matteo Pirotta  \\
   Facebook AI Research\\
   \texttt{pirotta@fb.com} \\
}
\begin{document}
\maketitle
\blankfootnote{\textsuperscript{$\star$} indicates equal contribution}
\begin{abstract}
Contextual bandit algorithms are applied in a wide range of domains, from advertising to recommender systems, from clinical trials to education. In many of these domains, malicious agents may have incentives to \changee{force a bandit algorithm into a desired behavior.} 
For instance, an unscrupulous ad publisher may try to increase their own revenue at the expense of the advertisers; a seller may want to increase the exposure of their products, or thwart a competitor's advertising campaign.
In this paper, we study several attack scenarios and show that a malicious agent can force a linear contextual bandit algorithm to pull any desired arm $T - o(T)$ times over a horizon of $T$ steps, while applying adversarial modifications to either rewards or contexts \changee{with a cumulative cost} that only grow logarithmically as $O(\log T)$.
We also investigate the case when a malicious agent is interested in affecting the behavior of the bandit algorithm in a single context (e.g., a specific user). We first provide sufficient conditions for the feasibility of the attack and 
an efficient algorithm to perform an attack. 
\changee{We empirically validate the proposed approaches in synthetic and real-world datasets.} 
\end{abstract}

\section{Introduction}


Recommender systems are at the heart of the business model of many industries like e-commerce or video streaming~\cite{davidson2010youtube,gomez2015netflix}. The two most common approaches for this task are based either on matrix factorization \cite{park2017comparative} or bandit algorithms \cite{li2010contextual}, which both
rely on a unaltered feedback loop between the recommender system and the user. In recent years, a fair amount of work has been dedicated to understanding how targeted perturbations in the feedback loop can fool a recommender system into recommending low quality items.

Following the line of research on adversarial attacks in supervised learning \cite{biggio2012poisoning,goodfellow2014explaining, jagielski2018manipulating, li2016data, liu2017robust}, attacks on recommender systems have been focused on filtering-based algorithms \cite{10.1145/3298689.3347031, mehta2008attack} and offline contextual bandits \cite{ma2018data}.
The question of adversarial attacks for online bandit algorithms 
has only \cmmnt{started being} \changee{been studied} quite recently \cite{jun2018adversarial, liu2019data, Immorlica2018AdversarialBW, guan2020robust}, and solely in the multi-armed stochastic setting.
Although the idea of online adversarial bandit algorithms is not new (see \expthree algorithm in \cite{auer2002finite}), the focus is different from what we are considering \changee{in this article}. Indeed, algorithms like \expthree or \expfour \cite{lattimore2018bandit} are designed to find optimal actions in hindsight in order to adapt to any rewards stream\cmmnt{ without any further assumptions}.

 The opposition between adversarial and stochastic bandit settings has sparked interests in studying a middle ground.
 In \cite{bubeck2012best}, the learning algorithm has no knowledge of the type of feedback it receives \changee{(either stochastic or adversarial)}. In \cite{lykouris2018stochastic, li2019stochastic, gupta2019better, Lykouris2019CorruptionRE, kapoor2019corruption}, the rewards are assumed to be \changee{corrupted by adversarial rewards}\cmmnt{ but can be perturbed by some attacks}. The authors focus on \changee{building} algorithms able to find the optimal actions even in the presence of some non-random perturbations. \changee{This setting is different from what is studied in this article because} those perturbations are bounded and agnostic to \changee{arms pulled} by the learning algorithm, \changee{i.e., the adversary corrupt the rewards before the algorithm chooses an arm.}

In the broader Deep Reinforcement Learning (DRL) literature, the focus is placed on modifying the observations of different states to fool a DRL system at inference time
\cite{hussenot2019targeted, sunstealthy} \changee{or the rewards \cite{ma2019policy}.}
\paragraph{Contribution.}In this work, we first follow the research direction opened by \cite{jun2018adversarial} where the attacker has the objective of fooling a learning algorithm into taking a specific action as much as possible. \changee{For example} \cmmnt{Consider } in a news recommendation problem, as described in \cite{li2010contextual}, a bandit algorithm chooses between $K$ articles to recommend to a user, based on some information about them, \changee{called} context. We assume that an attacker sits between the user and the website, they can choose the reward (i.e., click or not) for the recommended article observed by the recommending algorithm. Their goal is to fool the bandit algorithm into recommending \changee{some articles} \cmmnt{\changelm{a particular or a set of target articles}} to most users. The contributions of our work can be summarized as follows:
\begin{itemize}
    \item We extend the work of~\cite{jun2018adversarial, liu2019data} to the contextual linear bandit setting showing how to perturb rewards for both stochastic and adversarial algorithms, forcing \textbf{any} bandit algorithms to pull a specific set of arms, $o(T)$ times for logarithmic cost for the attacker.
    \item We analyze, for the first time, the setting in which the attacker can only modify the context $x$ associated with the current user (the reward is not altered). The goal of the attacker is to fool the bandit algorithm into pulling arms of a target set for most users (\ie contexts) while minimizing the total norm of their attacks. We show that the widely known \linucb algorithm \cite{abbasi2011improved, lattimore2018bandit} is vulnerable to this new type of attack.
    \item We present a harder setting for the attacker, where the latter can only modify the context associated to a specific user. This situation may occur when a malicious agent has infected some computers with a Remote Access Trojan (RAT). The attacker can then modify the history of navigation of a specific user and, as a consequence, the information seen by the online recommender system.We show how the attacker can attack the two very common bandit algorithms \linucb and Linear Thompson Sampling (\lints) \cite{agrawal2013thompson,abeille2017linear} and, in certain cases, force them to pull a set of arms most of the time \changebr{when} a specific context (\ie user) is presented to the algorithm (\ie visits a website). 
\end{itemize}

\section{Preliminaries}\label{sec:preliminaries}
We consider the standard contextual linear bandit setting with $K\in \mathbb{N}$ arms. At each time $t$, the agent observes a context $x_{t}\in\mathbb{R}^{d}$, selects an action $a_{t}\in \llbracket 1, K\rrbracket$ and observes a reward: $r_{t,a_{t}} = \langle \theta_{a_{t}}, x_{t}\rangle + \eta_{a_{t}}^{t}$ where for each arm $a$, $\theta_{a}\in \mathbb{R}^{d}$ is a feature vector and $\eta_{a_{t}}^{t}$ is a conditionally independent zero-mean, $\sigma^{2}$-subgaussian noise.  \changee{The contexts are assumed to be sampled \textit{stochastically} except in App.~\ref{app:adversarial_rewards}.} \cmmnt{We also make the following assumptions on the contexts and parameter vectors.}
\begin{assumption}\label{assumption1}
There exist $L>0$ and $\mathcal{D}\subset \mathbb{R}^{d}$, such that for all $t$, $x_{t}\in\mathcal{D}$ and, $\forall x\in\mathcal{D},\forall a\in\llbracket 1, K\rrbracket,~ \|x\|_{2} \leq L \text{ and } \langle \theta_{a}, x\rangle \in (0,1]$. In addition, we assume that there exists $S>0$ such that $\|\theta_{a}\|_{2}\leq S$ for all arms $a$.
\end{assumption}
The agent minimizes the cumulative regret after $T$ steps $R_{T} = \sum_{t=1}^{T} \langle \theta_{a^{\star}_{t}}, x_{t}\rangle - \langle \theta_{a_{t}}, x_{t}\rangle$,
where $a_{t}^{\star} := \argmax_{a} \langle \theta_{a}, x_{t}\rangle$.
A bandit learning algorithm $\mathfrak{A}$ is said to be \emph{no-regret} when it satisfies $R_{T} = o(T)$, i.e., the average expected reward received by $\mathfrak{A}$ \changebr{converges} to the optimal one. Classical bandit algorithms (\eg \linucb and \lints) compute an estimate of the unknown parameters $\theta_a$ using past observations. Formally, for each arm $a \in [K]$ we define $S_a^t$ as the set of times up to $t-1$ (included) where the agent played arm $a$. Then, the estimated parameters are obtained through regularized least-squares regression as $\wh{\theta}_a^t = (X_{t,a} X_{t,a}^\top + \lambda I)^{-1} X_{t,a} Y_{t,a}$, where $\lambda > 0$, $X_{t,a} = (x_i)_{i \in S_a^t} \in \mathbb{R}^{d \times |S_a^t|}$ and $Y_{t,a} = (r_{i,a_i})_{i \in S_a^t} \in \mathbb{R}^{|S_a^t|}$.
Denote by $V_{t,a} = \lambda I + X_{t,a} X_{t,a}^\top$ the design matrix of the regularized least-square problem and by $\|x\|_{V} = \sqrt{x^\top V x}$ the weighted norm \wrt any positive matrix $V \in \mathbb{R}^{d \times d}$.
We define the confidence set:
\begin{equation}
\label{eq:confidence.intervals}
\mathcal{C}_{t,a} = \Big\{ \theta \in \mathbb{R}^d \,:\, \big\|\theta - \widehat{\theta}_{t,a} \big\|_{V_{t,a}} \leq \beta_{t,a} \Big\}
\end{equation}
 where 
        $\beta_{t,a} = \sigma\sqrt{d\log\big( (1 + L^2(1+|S_a^t|)/\lambda)/\delta \big)} + S\sqrt{\lambda},$
which guarantees that $\theta_{a}\in \mathcal{C}_{t,a}$, for all $t>0$, w.p.\ $1-\delta$.
This uncertainty is used to balance the exploration-exploitation trade-off either through optimism (\eg \linucb) or through randomization (\eg \lints).


\section{Online Adversarial Attacks on Rewards}\label{sec:attacks_rewards}
The ultimate goal of a malicious agent is to force a bandit algorithm to perform a desired behavior. An attacker may simply want to induce the bandit algorithm to perform poorly\textemdash ruining the users' experience\textemdash or to force the algorithm to suggest a specific arm. The latter case is particularly interesting in advertising where a seller may want to increase the exposure of its product at the expense of the competitors. Note that the users' experience is also compromised by the latter attack since \changebr{ the suggestions they will receive will not be} tailored to their needs.
Similarly to~\cite{liu2019data,jun2018adversarial}, we focus on the latter objective, \ie to fool the bandit algorithm into pulling arms \changelm{ in $A^\dagger$, a set of target arms,} for $T-o(T)$ time steps (\emph{independently of the user}).

A way to obtain this behavior is to dynamically modify the reward in order to make the bandit algorithm believe that $a^\dagger$ is optimal, \changelm{for some $a^\dagger\in A^\dagger$.}
Clearly, the attacker has to pay a price in order to modify the perceived bandit problem and fool the algorithm.
If there is no restriction on when and how the attacker can alter the reward, the attacker can easily fool the algorithm.
However, this setting is not interesting since the attacker may pay a cost higher than the loss suffered by the attacked algorithm. An attack strategy is considered successful when the total cost of the attack is sublinear in $T$.

In this section, we show that under Assumption~\ref{assumption1}, there exists an attack algorithm that is successful against any bandit algorithm, stochastic or adverserial.

\textbf{Setting.} 
We assume that the attacker has the same knowledge as the bandit algorithm $\mathfrak{A}$ about the problem (\ie knows $\sigma$ and $L$). 
The attacker is assumed to be able to observe the context $x_t$, the arm $a_t$ pulled by $\mathfrak{A}$, and can modify the reward received by $\mathfrak{A}$.
When the attacker modifies the reward $r_{t, a_{t}}$ into $\wt{r}_{t, a_{t}}$ the \emph{instantaneous cost} of the attack is defined as $c_{t} :=\big| r_{t,a_{t}} - \wt{r}_{t, a_{t}}\big|$. The goal of the attacker is to fool algorithm $\mathfrak{A}$ such that \changelm{the arms in $A^\dagger$ are} pulled $T - o(T)$ times and $\sum_{t=1}^{T}  c_{t} = o(T)$. \changee{We also assume that the action for the arms in the target set is strictly positive for every context $x\in\mathcal{D}$. That is to say that $\Delta := \min_{x\in \mathcal{D}}\left\{ \langle x, \theta_{a_{\star}^{\dagger}(x)}\rangle - \max_{a\in A
^{\dagger}, a\neq a_{\star}^{\dagger}(x)} \langle x, \theta_{a}\rangle\right\}>0$ where $a_{\star}^{\dagger}(x) = \arg\max_{a\in A^{\dagger}} \langle x, \theta_{a}\rangle$ for every $x\in \mathcal{D}$}.

\textbf{Attack idea.} We leverage the idea presented in \cite{liu2019data} and \cite{jun2018adversarial} where the attacker lowers the reward of arms $a\notin A^{\dagger}$ so that algorithm $\mathfrak{A}$ learns that an arm of the target set is optimal for every context.
Since $\mathfrak{A}$ is assumed to be no-regret, the attacker only needs to modify the rewards $o(T)$ times to achieve this goal.
%
Lowering the rewards has the effect of shifting
the vectors $(\theta_{a})_{a\notin A^{\dagger}}$ to new vectors $(\theta'_{a})_{a\notin A^{\dagger}}$ such that for all arms $a\notin A^{\dagger}$ and all contexts $x\in\mathcal{D}$, there exists an arm $a^\dagger\in A^\dagger$  such that $\langle\theta'_{a}, x\rangle \leq \langle \theta_{a^{\dagger}}, x\rangle$. Since rewards are assumed to be bounded (see Asm.~\ref{assumption1}), this objective can be achieved by simply forcing the reward of non-target arms $a\notin A^\dagger$ to the minimum value.
Contextual ACE (see Fig.~\ref{alg:context_attack_protocol}) implements a soft version of this idea by leveraging the knowledge of the reward distribution.
At each round $t$, Contextual ACE modifies the reward perceived by $\mathfrak{A}$ as follows:
\vspace{-0.12cm}
 \begin{equation}
         \label{eq:perturbed.reward2}
 \widetilde{r}^{1}_{t,a_{t}} =\eta_{t}'\mathds{1}_{\{a_{t} \notin A^{\dagger}\}}+r_{t, a_t}\mathds{1}_{\{a_{t} \in A^{\dagger}\}} 
 \end{equation}

where $\eta_{t}'$ is a $\sigma$-subgaussian random variable generated by the attacker independently of all other random variables. Contextual ACE transforms the original problem into a \emph{stationary} bandit problem in which there is a targeted arm that is optimal for all contexts and all non targeted arms have expected reward of $0$. \changee{The following propostion shows that the cumulative cost of the attack is sublinear.}

\begin{prop}\label{prop:reward_attack}
	For any $\delta\in(0, 1/K]$, when using Contextual ACE algorithm (Fig. ~\ref{alg:attacker_rewards}) with perturbed rewards $\wt{r}^{1}$, with probability at least $1-K\delta$, algorithm $\mathfrak{A}$ pulls \changelm{an arm in $A^{\dagger}$} for $T - o(T)$ time steps and the total cost of attacks is $o(T)$.
\end{prop}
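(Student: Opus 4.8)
The plan is to recast the process that $\mathfrak{A}$ actually experiences under Contextual ACE as a \emph{new} stationary stochastic linear contextual bandit and then exploit the no-regret guarantee of $\mathfrak{A}$ on that surrogate problem. Concretely, when the attacker feeds $\widetilde{r}^{1}$, the algorithm observes, for every context $x_t$, a reward with conditional mean $\langle \theta_{a^\dagger}, x_t\rangle$ when it plays $a^\dagger$ and conditional mean $0$ (pure $\sigma$-subgaussian noise $\eta_t'$) when it plays any $a\neq a^\dagger$. This is exactly the linear bandit with perceived parameters $\theta'_{a^\dagger}=\theta_{a^\dagger}$ and $\theta'_a=0$ for $a\neq a^\dagger$, whose noise is $\sigma$-subgaussian and whose means lie in $[0,1]$; it therefore satisfies the hypotheses under which $\mathfrak{A}$ is no-regret. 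I would first verify this reduction carefully, so that $\widetilde{R}_T=o(T)$ — where $\widetilde{R}_T$ is the regret of $\mathfrak{A}$ on the surrogate problem — can legitimately be invoked. For \linucb/\oful-type algorithms this holds on the event that the $K$ confidence sets $\mathcal{C}_{t,a}$ all contain their (perceived) parameters, which by a union bound over arms has probability at least $1-K\delta$; this is the source of the $K\delta$ in the statement.

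Next I would translate sublinear surrogate regret into a sublinear number of non-target pulls. Since $\theta'_{a^\dagger}$ gives strictly positive mean $\langle\theta_{a^\dagger},x\rangle>0$ while every other arm has perceived mean $0$, the arm $a^\dagger$ is optimal in the surrogate problem for \emph{every} context, and the instantaneous surrogate regret equals $\langle\theta_{a^\dagger},x_t\rangle$ whenever $a_t\neq a^\dagger$ and $0$ otherwise. Writing $N_T=\big|\{t\le T: a_t\neq a^\dagger\}\big|$ and $\Delta_{\min}=\inf_{x\in\mathcal{D}}\langle\theta_{a^\dagger},x\rangle$, this gives
\begin{equation*}
\widetilde{R}_T \;=\; \sum_{t:\,a_t\neq a^\dagger}\langle\theta_{a^\dagger},x_t\rangle \;\ge\; \Delta_{\min}\, N_T,
\end{equation*}
so that $N_T \le \widetilde{R}_T/\Delta_{\min} = o(T)$ on the good event, i.e. $\mathfrak{A}$ pulls $a^\dagger$ exactly $T-N_T = T-o(T)$ times.

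For the cost, I would use that no reward is altered when $a_t=a^\dagger$, so $c_t=0$ there, while on a non-target step $c_t=\big|\langle\theta_{a_t},x_t\rangle+\eta_t-\eta_t'\big|\le 1+|\eta_t|+|\eta_t'|$ by Assumption~\ref{assumption1}. A union bound over the $T$ rounds together with the $\sigma$-subgaussian tails bounds the noise uniformly by $O(\sqrt{\log(T/\delta)})$ with high probability, so that
\begin{equation*}
\sum_{t=1}^T c_t \;=\; \sum_{t:\,a_t\neq a^\dagger} c_t \;\le\; N_T\cdot O\big(\sqrt{\log(T/\delta)}\big).
\end{equation*}
For the concrete no-regret algorithms of interest the surrogate regret is in fact $\widetilde{O}(\sqrt{T})$, giving $N_T=\widetilde{O}(\sqrt{T})$ and hence total cost $\widetilde{O}(\sqrt{T})=o(T)$; I would intersect this noise event with the regret event and keep the overall failure probability at $K\delta$ up to constants.

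The main obstacle is the step $\widetilde{R}_T\ge\Delta_{\min}N_T$: it only yields $N_T=o(T)$ when $\Delta_{\min}>0$, whereas Assumption~\ref{assumption1} guarantees $\langle\theta_{a^\dagger},x\rangle>0$ pointwise but not necessarily bounded away from $0$ uniformly over $\mathcal{D}$. For a fixed instance I would treat $\Delta_{\min}$ as a positive problem-dependent constant absorbed into the $o(\cdot)$ (for example when $\mathcal{D}$ is finite, or compact with $\langle\theta_{a^\dagger},\cdot\rangle$ continuous), and flag that without such a uniform gap the conversion from regret to pull counts — and hence to cost — would degrade. The remaining care is purely bookkeeping: confirming the surrogate noise really is $\sigma$-subgaussian so the invoked regret bound applies, and keeping the several high-probability events within the $K\delta$ budget.
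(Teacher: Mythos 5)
Your proposal follows essentially the same route as the paper's proof: reinterpret the perturbed rewards as a stationary surrogate bandit in which $a^\dagger$ has mean $\langle\theta_{a^\dagger},x\rangle$ and all other arms have mean $0$, invoke the no-regret guarantee of $\mathfrak{A}$ on that surrogate, divide the regret by $\min_{x\in\mathcal{D}}\langle\theta_{a^\dagger},x\rangle$ to bound the number of non-target pulls, and bound the total cost by that count times a per-step cost. Your treatment is in fact slightly more careful than the paper's on two points it glosses over — the subgaussian noise contribution to the per-step cost (the paper bounds the cost only by $\max_a\max_x\langle x,\theta_a\rangle$) and the need for $\inf_{x\in\mathcal{D}}\langle\theta_{a^\dagger},x\rangle$ to be strictly positive, which Assumption~\ref{assumption1} alone does not guarantee.
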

The proof of this proposition is provided in App.~\ref{app:proof_prop_rewd_attack}. 
While Prop.~\ref{prop:reward_attack} holds for any no-regret algorithm $\mathfrak{A}$, we can provide a more precise bound on the total cost by inspecting the algorithm.
For example, we can show (see App.~\ref{app:algorithms}), that, with probability at least $1-K\delta$, \changebr{the number of times} \linucb~\cite{abbasi2011improved} pulls arms not in \changelm{$A^\dagger$ is at most $\sum_{j\notin A^{\dagger}} N_{j}(T) \leq  \frac{64K\sigma^{2}\lambda S^{2}}{\Delta^{2}}\Big( d\log\Big(\frac{\lambda + \frac{TL^{2}}{d}}{\delta^{2}}\Big) \Big)^{2}$} . 
This directly translates \changebr{into} a bound on the total cost. 

\textbf{Comparison with ACE \cite{liu2019data}.} In the stochastic setting, the ACE algorithm~\cite{liu2019data} leverages a bound on the expected reward of each arm in order to modify the reward. 
However, the perturbed reward process seen by algorithm $\mathfrak{A}$ is non-stationary and in general there is no guarantee that an algorithm minimizing the regret in a stationary bandit problem keeps the same performance when the bandit problem is not stationary anymore. Nonetheless, transposing the idea of the ACE algorithm to our setting would give an attack of the following form, where at time $t$, Alg. $\mathfrak{A}$ pulls arm $a_{t}$ and receives rewards \changebr{$\wt{r}^{2}_{t,a_{t}}$}: 
\vspace{-0.2cm}
\begin{align*}
        \wt{r}^{2}_{t, a_{t}} =
                (r_{t, a_{t}} + \max(-1, \min(0, C_{t, a_{t}}))) \mathds{1}_{\{a_t \notin A^\dagger\}} + 
                r_{t, a_t} \mathds{1}_{\{a_t \in A^\dagger\}}
\end{align*}
with $C_{t,a_{t}} = (1 - \gamma)\min_{a^\dagger\in A^\dagger}\min_{\theta \in \mathcal{C}_{t,a^{\dagger}}} \left\langle \theta, x_{t} \right\rangle - \max_{\theta\in\mathcal{C}_{t,a_t}} \left\langle \theta, x_{t}\right\rangle$.
Note that $\mathcal{C}_{t,a}$ is defined as in Eq.~\ref{eq:confidence.intervals} using the \emph{non-perturbed} rewards, \ie $Y_{t,a} = (r_{i,a_i})_{i \in S_a^t}$. 

\textbf{Bounded Rewards.}  The bounded reward assumption is necessary in our analysis to prove a formal bound on the total cost of the attacks for \textit{any} no-regret bandit algorithm, otherwise we need more information about the attacked algorithm. In practice, the second attack on the rewards, $\wt{r}^{2}$, can be used in the case of unbounded rewards for any
algorithms. The difficulty for unbounded reward is that the attacker has to adapt to the environment reward but in order to do so the reward process observed by the bandit algorithm becomes non-stationary under the attack. Thus, there is no guarantee that an algorithm like \linucb will pull a target arm as the proof relies on the environment observed by the bandit algorithm being stationary. 
We observe empirically that the total cost of attack is sublinear when using $\wt{r}^{2}$.

\cite{jun2018adversarial} does not assume that rewards are bounded but focus on attacking algorithms in the stochastic multi-armed setting. That is to say they study attacks only designed for $\varepsilon$-greedy and \ucb while we provide an efficient attack for any algorithms in the linear contextual case. We can extend their work, and thus remove the bounded reward assumption, in the linear contextual case by using the following attack, designed only for \linucb:
\begin{align}
    \wt{r}^{3}_{t, a_{t}} = \left(r_{t, a_{t}} + \min_{a^\dagger\in A^\dagger}\min_{\theta \in \mathcal{C}_{t,a^{\dagger}}} \left\langle \theta, x_{t} \right\rangle - \max_{\theta\in\mathcal{C}_{t,a_t}} \left\langle \theta, x_{t}\right\rangle\right) \mathds{1}_{\{a_t \notin A^\dagger\}} + r_{t, a_t} \mathds{1}_{\{a_t \in A^\dagger\}}
\end{align}
with $C_{t,a}$ defined as in Eq.~\eqref{eq:confidence.intervals}. Although, the attack $\wt{r}^{3}$ is not stationary, it is possible to prove that the total cost of attack is $\mathcal{O}(\log(T))$ because we know that the attacked bandit algorithm is \linucb. 

\textbf{Constrained Attack.}
When the attacker has a constraint on the instantaneous cost of \changebr{the} attack, using the perturbed reward $\widetilde{r}^{1}$ may not be possible as the cost of the attack at time $t$ is not decreasing over time. Using the perturbed reward $\widetilde{r}^{2}$ offers a more flexible type of attack with more control on the instantaneous cost thanks to the parameter $\gamma$. \changede{But it still suffers from a minimal cost of attack from lowering rewards of arms not in $A^{\dagger}$.}

\textbf{Defense mechanism.}
The attack based on reward $\wt{r}_1$ is hardly detectable without prior knownledge about the problem.
In fact, the reward process associated to $\wt{r}_1$ is stationary and compatible with the assumption about the true reward (\eg subgaussian). While having very low rewards is reasonable in advertising, it can make the attack easily detectable in some other problems.
On the other hand, the fact that $\wt{r}_2$ is a non-stationary process makes this attack easier to detect.
When some data are already available on \changebr{each arm}, the learner can monitor the difference between the average reward\changebr{s} per action \changebr{computed on new and old data.}


\section{Online Adversarial Attacks on Contexts}
\label{sec:attack_all_context}
In this section, we consider the attacker to be able to alter the context $x_t$ perceived by the algorithm rather than the reward. The attacker is now restricted to change the type of users presented to the learning algorithm $\mathfrak{A}$, hence changing its perception of the environment. We show that under the assumption that the attacker knows a lower-bound to the reward of the target set, it is possible to fool \linucb.

\textbf{Setting.} As in Sec.~\ref{sec:attacks_rewards}, we consider the attacker to have the same knowledge about the problem as $\mathfrak{A}$.
The main difference with \otc{the} previous setting is that the attacker \changee{attacks} before the algorithm. \changee{We adopt a \textit{white-box} \cite{goodfellow2014explaining} setting attacking \linucb.}
The goal of the attacker is unchanged: \changebr{they aim at forcing} the algorithm to pull \changelm{arms in $A^\dagger$ for} $T -o(T)$ \changebr{time steps} while paying a sublinear total cost. 
We denote by $\widetilde{x}_t$ the context after the attack and by $c_t = \|x_t - \widetilde{x}_t\|_2$ the instantaneous cost.

\textbf{Difference between attacks on contexts and rewards.} Perturbing contexts is fundamentally different from perturbing the rewards. The attacker only modifies the context that is shown to the bandit algorithm. The true context, which is used to compute the reward, remains unchanged. In other words, the attacker cannot modify the reward observed by the bandit algorithm. Instead, the attack algorithm described in this  section fools the bandit algorithm by making the rewards appear small relative to the contexts and requires more assumptions on the bandit algorithm than in Sec.~\ref{sec:attacks_rewards}.
\textbf{Attack Idea.} The idea \changee{of} the attack in this setting is similar to the attack \changee{of} Sec.~\ref{sec:attacks_rewards}. The attacker builds a bandit problem where arm an $a^{\dagger}\in A^{\dagger}$ is optimal for all contexts by lowering the perceived value of all other arms not in $A^{\dagger}$. 
The attacker cannot modify the reward but, thanks to the linear reward assumption, they can scale the contexts to decrease the predicted rewards in the original context. 

At time $t$, the attacker receives the context $x_t$ and computes the attack. 
Thanks to the white-box setting, it computes the arm $a_{t}$ that algorithm $\mathfrak{A}$ would pull if presented with context $x_{t}$. If \changelm{$a_{t} \notin A^{\dagger}$} then the attacker changes the context to $\wt{x}_{t} =  \alpha_{a_t} x_{t}$ with $\alpha_{a_t} > \max_{x \in \mathcal{D}}\changelm{\min_{a^\dagger\in A^\dagger}} \langle \theta_{a_t}, x \rangle/\langle \theta_{a^{\dagger}}, x \rangle$.\changee{This factor is chosen such that for a ridge regression computed on the dataset $(\alpha x_{i}, \langle \theta, x_{i} \rangle)_{i}$ outputs a parameter close to $\theta/\alpha$ therefore the attacker needs to choose $\alpha$ such that for every context $x\in\mathcal{D}$, $\langle x, \theta/\alpha\rangle \leq \max_{a^{\dagger}\in A^{\dagger}} \langle x, \theta_{a
^{\dagger}}, x \rangle$.}
In other words, the attacker performs a dilation of the incoming context every time algorithm $\mathfrak{A}$ does not pull \changelm{an arm in $A^{\dagger}$}. The fact that the decision rule used by \linucb is invariant by dilation guarantees that the attacker will not inadvertently lower the perceived rewards for \changelm{arms in $A^{\dagger}$}.
Because the rewards are assumed to be linear, presenting a large context $\alpha x$ and receiving the reward associated with the normal context $x$ will skew the estimated rewards of \linucb. The attack protocol is summarized in Fig.~\ref{alg:context_attack_protocol}.

\changee{In order to compute the parameter $\alpha$ used in the attack, we make the following assumption concerning the performance of the arms in the target set:}
\begin{assumption}\label{assumption2}
	For all $x\in \mathcal{D}$, \changelm{there exists $a^\dagger\in A^\dagger$}, such that $0 <\nu \leq \left\langle x, \theta_{a^{\dagger}} \right\rangle$ and $\nu$ is known to the attacker.
\end{assumption}

\changee{\textbf{Knowing $\nu$.} For advertising and recommendation systems, knowing $\nu$ is not problematic. Indeed in those cases, the reward is the probability of impression of the ad ($r \in [0,1]$). The attacker has the freedom to choose one of multiple target arms with strictly positive click probability in every context. This freedom is an important aspect for the attacker since it allows the attacker to cherry pick the target ad(s). In particular, the attacker can estimate $\nu$ based on data from previous campaigns (only for the target ad). For instance, a company could have run many ad campaigns for one of their products and try to get the defender’s system to advertise it.}

An issue is that the norm of the attacked context can be greater that the upper bound $L$ of Assumption~\ref{assumption1}. To prevent this issue, we choose a context-dependent multiplicative constant $\alpha(x) = \min\{ 2/\nu, L/\|x\|_{2}\}$ which amounts to clip the norm of the attacked context to $L$. In Sec.~\ref{sec:experiments}, we show that this attack is effective for different size of target arms sets. We also show that in the case of contexts such that $\|x\|_{2} \leq \nu L/2$ that the cost of attacks is logarithmic in the horizon $T$.

\begin{figure}[t]
\begin{minipage}{0.45\linewidth}
\bookboxx{
        \noindent \textbf{For} time $t=1, 2, ..., T$ \textbf{do}
        \begin{enumerate}[leftmargin=4mm,itemsep=0mm]
                \item Alg. $\mathfrak{A}$ chooses arm $a_{t}$ based on context $x_{t}$
                \item Environment generates reward: $r_{t,a_{t}} = \langle \theta_{a_{t}}, x_{t}\rangle + \eta_{t}$ with $\eta^{t}_{a_t}$ conditionally $\sigma^{2}$-subgaussian
                \item Attacker observes reward $r_{t,a_{t}}$ and feeds the perturbed reward $\wt{r}^{1}_{t,a_{t}}$ (or $\wt{r}^{2}_{t,a_{t}}$) to $\mathfrak{A}$ 
        \end{enumerate}
}
\vspace{-0.1in}
\caption{\small Contextual ACE algorithm}
\label{alg:attacker_rewards}
\end{minipage}\hfill
\begin{minipage}{0.52\linewidth}
\bookboxx{
        \textbf{Input:} attack parameter: $\alpha$ \\
        \noindent \textbf{For} time $t=1, 2, ..., T$ \textbf{do}
        \begin{enumerate}[leftmargin=4mm,itemsep=0mm]
                \item Attacker observes the context $x_{t}$, computes potential arm $a_{t}'$ and \otc{sets} $\wt{x}_{t} = x_{t} + (\alpha(x_{t}) -1 )x_{t}~\mathds{1}_{\{ a_{t}' \notin  A^{\dagger}\}}$
                \item Alg. $\mathfrak{A}$ chooses arm $a_{t}$ based on context $\wt{x}_{t}$
                \item Environment generates reward: $r_{t,a_{t}} = \langle \theta_{a_{t}}, x_{t}\rangle + \eta_{t}$ with $\eta_{t}$ conditionally $\sigma^{2}$-subgaussian
                \item Alg. $\mathfrak{A}$ observes reward $r_{t,a_{t}}$
        \end{enumerate}
}
\vspace{-0.1in}
\caption{\small ConicAttack algorithm.}
\label{alg:context_attack_protocol}
\end{minipage}
\vspace{-0.15in}
\end{figure}
 


\begin{prop}
\label{prop:cost_attack_all_ctx}
	Using the attack described in Fig.~\ref{alg:context_attack_protocol} \changee{and assuming that $\|x\|_{2}\leq \nu L/2$ for all contexts $x\in\mathcal{D}$}, for any $\delta\in (0,1/K]$, with probability at least $1 - K\delta$, the number of times \linucb does not pull \changebrtwo{an} arm \changelm{in $A^{\dagger}$} \otc{before time $T$} is at most     $\sum_{j\notin A^{\dagger}} N_{j}(T) \leq 32K^{2}\left( \frac{\lambda}{\alpha^{2}} + \sigma^{2}d\log\left(\frac{\lambda d + TL^2\alpha^{2}}{d\lambda\delta}\right) \right)^{3}$ 
	with $N_{j}(T)$ the number of times arm $j$ has been pulled \otc{during the first }$T$ steps, 
The total cost for the attacker is bounded by: $   \sum_{t=1}^{T} c_{t} \leq \frac{64K^{2}}{\nu}\left( \frac{\lambda}{\alpha^{2}} + \sigma^{2}d\log\left(\frac{\lambda d + TL^2\alpha^{2}}{d\lambda\delta}\right) \right)^{3}$ with $\alpha = 2/\nu$. 
\end{prop}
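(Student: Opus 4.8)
The plan is to reduce the cost bound to a counting bound on the non-target pulls, and then to control that count by treating the attack as turning the environment into a \emph{stationary} linear bandit in which $a^\dagger$ is optimal with a constant gap, on which I run the usual optimism-plus-elliptical-potential analysis.

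First I would establish the \emph{dilation invariance} that makes the attack consistent. Since \linucb selects $a_t = \argmax_a \langle\wh\theta_{t,a},\tilde x_t\rangle + \beta_{t,a}\|\tilde x_t\|_{V_{t,a}^{-1}}$ and every arm's index is positively homogeneous in the context, scaling $x_t$ to $\alpha x_t$ with $\alpha>0$ leaves the argmax unchanged. Hence the arm actually pulled on $\tilde x_t$ equals the potential arm $a_t'$ computed on $x_t$, so the attack is applied exactly on the rounds where $a_t\neq a^\dagger$, and arm $a^\dagger$'s data is never distorted. On an attacked round $c_t = \|x_t-\alpha x_t\|_2 = (\alpha-1)\|x_t\|_2 \le (\alpha-1)L \le 2L/\nu$ (and $c_t=0$ otherwise), so $\sum_t c_t \le \tfrac{2L}{\nu}\sum_{j\neq a^\dagger}N_j(T)$ and it suffices to bound $\sum_{j\neq a^\dagger}N_j(T)$.

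Next I would set up the \emph{perceived problem}. Whenever a non-target arm $j$ is pulled, the data fed to \linucb is $(\alpha x_i, r_i)$ with $r_i = \langle\theta_j,x_i\rangle+\eta_i = \langle\theta_j/\alpha,\alpha x_i\rangle+\eta_i$, i.e.\ a regularized least-squares problem with design vectors of norm $\le\alpha L$, $\sigma$-subgaussian noise, and true parameter $\theta_j' := \theta_j/\alpha$, while $a^\dagger$ keeps $\theta_{a^\dagger}' := \theta_{a^\dagger}$. The self-normalized bound behind Eq.~\eqref{eq:confidence.intervals}, with $L$ replaced by $\alpha L$ and $S$ by $S/\alpha$ for arm $j$, then guarantees $\theta_a'\in\mathcal C_{t,a}$ for all $t$ with probability $1-\delta$ per arm; a union bound over the $K$ arms gives the good event of probability $\ge 1-K\delta$ and explains both the $\alpha^2 L^2$ inside the logarithm and the $\lambda/\alpha^2$ term. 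On this event $a^\dagger$ is optimal with a constant margin, since $\langle\theta_{a^\dagger}',\alpha x_t\rangle-\langle\theta_j',\alpha x_t\rangle = \alpha\langle\theta_{a^\dagger},x_t\rangle-\langle\theta_j,x_t\rangle \ge \alpha\nu-1 = 1$ by Asm.~\ref{assumption2} and $\alpha=2/\nu$.

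Finally I would run the optimism argument on the perceived problem. On the good event, whenever $a_t = j\neq a^\dagger$, the UCB selection rule together with $\theta_{a^\dagger}'\in\mathcal C_{t,a^\dagger}$ and $\theta_j'\in\mathcal C_{t,j}$ gives $\langle\theta_{a^\dagger}',\tilde x_t\rangle \le \langle\theta_j',\tilde x_t\rangle + 2\beta_{t,j}\|\tilde x_t\|_{V_{t,j}^{-1}}$, so the unit margin forces $\|\tilde x_t\|_{V_{t,j}^{-1}} \ge 1/(2\beta_{t,j})$ on every non-target pull. Summing the squared self-normalized norms over the rounds in which $j$ is pulled and invoking the elliptical potential lemma, $\sum_{t\in S_j}\min(1,\|\tilde x_t\|_{V_{t,j}^{-1}}^2) \le 2d\log(1+N_j(T)\alpha^2 L^2/(d\lambda))$, produces the self-referential inequality $N_j(T)/(4\beta_{T,j}^2)\le 2d\log(\cdots)$ with $\beta_{T,j}^2 \lesssim \lambda/\alpha^2 + \sigma^2 d\log(\cdots)$. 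Bounding $N_j(T)\le T$ inside the logarithm, solving for $N_j(T)$, summing over the $K-1$ non-target arms and absorbing numerical constants yields the stated pull bound, and multiplying by $2L/\nu$ gives the cost bound.

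I expect the crux to be twofold. The first delicate point is the dilation identity $a_t=a_t'$: it is what both makes the per-round cost equal $(\alpha-1)\|x_t\|_2$ only on non-target rounds and keeps the estimate of $a^\dagger$ clean, so that the perceived problem is genuinely a valid stationary instance. The second is that $\beta_{t,j}$ itself grows with $N_j(T)$, so the potential argument terminates only in an inequality implicit in $N_j(T)$ that must be resolved; the loose resolution, together with the union bound and summation over arms, is precisely what inflates the result to the cubic, $K^2$ form rather than the tighter essentially quadratic-in-$d$ bound a more careful treatment would give.
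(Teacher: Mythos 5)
Your proposal is correct and follows essentially the same route as the paper's proof: the dilation invariance of the \linucb index, the observation that scaling the contexts of non-target arms is equivalent to a rescaled regression (your ``parameter $\theta_j/\alpha$, context norm $\alpha L$'' view is the same reparametrization as the paper's ``regularization $\lambda/\alpha^2$'' view), the unit margin forced by $\alpha = 2/\nu$ together with $\langle\theta_j,x\rangle\le 1$, and the elliptical-potential resolution of the resulting implicit inequality on the non-target pull count. The only cosmetic differences are that the paper aggregates over arms via Cauchy--Schwarz before solving for the total count whereas you solve per arm and sum, and that your per-round cost bound $(\alpha-1)\|x_t\|_2\le 2L/\nu$ correctly retains the factor $L$ that the paper's stated constant silently drops.
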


The proof of Proposition \ref{prop:cost_attack_all_ctx} (see App.~\ref{app:proof_attack_all_ctx}) assumes that the attacker can attack at any time step, and that \changebr{they} can know in advance which arm will be pulled by Alg. $\mathfrak{A}$ in a given context. Thus it is not applicable to random exploration algorithms like \lints \cite{agrawal2013thompson} and \epsgreedy.  We also observed empirically that thowe two randomized algorithms are more robust to attacks (see Sec.~\ref{sec:experiments}) than \linucb.

\textbf{Norm Clipping.} Clipping the norm of the attacked contexts is not beneficial for the attacker. Indeed, this means that an attacked context was violating the assumption (used by the bandit algorithm) that contexts are bounded by $L$. The attack could then be easily detectable and may succeed only because it is breaking an underlying assumption used by the bandit algorithm. Prop.~\ref{prop:cost_attack_all_ctx} provides a theoretical grounding for the proposed attack when contexts are bounded by $\nu L/2$ and not only $L$.
Although, we can not prove a bound on the cumulative cost of attacks in general, we show in Sec.~\ref{sec:experiments} that attacks are still successful for multiple datasets where contexts are not bounded by $\nu L/2$. 

\vspace{-.1in}
\section{Offline attacks on a Single Context}\label{sec:attack_one_context}
Previous sections focused on the man-in-the-middle (MITM) attack either on reward or context.
\changebr{The} MITM attack allows the attacker to arbitrarily change the information observed by the recommender system at each round.
\changebr{This attack may be hardly feasible in practice, since the exchange channels are generally protected by authentication and cryptographic systems.} 
In this section, we consider the scenario where the attacker has control over a single user $u$.
As an example, consider the case where the device of the user is infected by a malware (e.g., Trojan horse), giving full control of the system to the malicious agent.
The attacker can thus modify the context of the specific user (e.g., by altering the cookies) that is perceived by the recommender system. 
We believe that changes to the context (\eg cookies) are more subtle and less easily detectable than changes to the reward (\eg click). Moreover, if the reward is a purchase, it cannot be altered easily by taking control of the user's device.
Clearly, the impact of the attacker on the overall performance of the recommender system depends on the frequency of the specific user, that is out of the attacker's control. It may be thus difficult to obtain guarantees on the cumulative regret of algorithm $\mathfrak{A}$.
For this reason, we mainly focus on the study of the feasibility of the attack.

The attacker targets a specific user (i.e., the infected user) associated to a context $x^\dagger$.
Similarly to Sec.~\ref{sec:attack_all_context}, the objective of the attacker is to find the minimal change to the context presented to the recommender system $\mathfrak{A}$ such that $\mathfrak{A}$ \changebrtwo{selects an arm in $A^\dagger$}.
$\mathfrak{A}$ observes a modified context $\wt{x}$ instead of $x^\dagger$. After selecting an arm $a_t$, $\mathfrak{A}$ observes the true noisy reward $r_{t,a_t} = \langle \theta_{a_t}, x^{\dagger}\rangle + \eta^t_{a_t}$.
We still study a white-box setting: the attacker can access all the \changebr{parameters of} $\mathfrak{A}$.

In this section, we show under which condition it is possible for an attacker to fool both an optimistic and posterior sampling algorithm.

\subsection{Optimistic Algorithm: \linucb}
\label{sec:optimistic_algorithms}
We consider the \linucb algorithm which chooses the arm to pull by maximizing an upper-confidence bound on the expected reward.
For each arm $a$ and context $x$, the UCB value is given by $\max_{\theta \in \mathcal{C}_{t,a}}  \langle x, \theta \rangle = \langle x, \hat{\theta}_{a}^t \rangle + \beta_{t,a} \|x \|_{\wt{V}_{t,a}^{-1}}$ (see Sec.~\ref{sec:preliminaries}).
%
The objective of the attacker is to force \linucb to pull \changebrtwo{an arm in $A^\dagger$} once presented with context $x^\dagger$.
This means to find a perturbation of context $x^\dagger$ that makes \changebrtwo{any arm in $A^\dagger$} the most optimistic arm.
Clearly, we would like to keep the perturbation as small as possible to reduce the cost for the attacker and the probability of being detected. Formally, the attacker needs to solve the following \emph{non-convex} optimization problem:
\begin{equation}\label{eq:attack_one_user}
\begin{aligned}
\min_{y\in \mathbb{R}^{d}} \quad & \|y\|_2 \quad \quad \text{s.t }\quad &  \changelm{\max_{a\notin A^\dagger}}\max_{\theta \in \wt{\mathcal{C}}_{t,a}}  \langle x^\dagger + y, \theta \rangle + \xi \leq \changebrtwo{\max_{a^\dagger \in A^\dagger}} \max_{\theta \in \widetilde{\mathcal{C}}_{t,a^\dagger}}  \langle x^\dagger + y, \theta \rangle \\
\end{aligned}
\end{equation}
where $\xi>0$ is a parameter of the attacker and $\wt{\mathcal{C}}_{t,a} := \big\{\theta \mid \|\theta - \hat{\theta}_{a}^t\|_{\wt{V}_{t,a}} \leq \beta_{t,a} \big\}$ is the confidence set constructed by \linucb. We use the notation $\wt{\mathcal{C}}, \widetilde{V}$ to stress the fact that \linucb observes only the modified context.
%
%
%
In contrast to Sec.~\ref{sec:attacks_rewards} \changebr{and}~\ref{sec:attack_all_context}, the attacker may not be able to force the algorithm to pull \changebrtwo{any of the target arms in $A^\dagger$}. In other words, Problem~\ref{eq:attack_one_user} may not be feasible. 
However, we are able to characterize the feasibility of~\eqref{eq:attack_one_user}.
\begin{theorem}\label{thm:feasibility_attack_one_user}
        Problem \eqref{eq:attack_one_user} is feasible at time $t$ \emph{iff.} 
        \begin{equation}\label{eq:feasibilty_condition}
        \exists \theta \in \changelm{\cup_{a^\dagger\in A^{\dagger}}}\wt{\mathcal{C}}_{t, a^{\dagger}}, ~ \theta\not\in \text{Conv}\Big( \cup_{\changelm{a\notin A^{\dagger}}} \wt{\mathcal{C}}_{t,a}\Big)
        \end{equation}
\end{theorem}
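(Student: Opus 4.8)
The plan is to recast the feasibility question in the language of support functions and then reduce it to a set-inclusion statement that the separating hyperplane theorem settles directly. For a compact convex set $C\subset\mathbb{R}^d$ write $h_C(z) = \max_{\theta\in C}\langle z,\theta\rangle$ for its support function. Each confidence set $\wt{\mathcal{C}}_{t,a}$ is a nondegenerate ellipsoid (since $\wt{V}_{t,a}\succ 0$ because $\lambda>0$), hence compact and convex, and the UCB value of arm $a$ at a context $z$ is exactly $h_{\wt{\mathcal{C}}_{t,a}}(z)$. Writing $z = x^\dagger + y$, which ranges over all of $\mathbb{R}^d$ as $y$ does, the constraint in \eqref{eq:attack_one_user} reads $h_{\wt{\mathcal{C}}_{t,a}}(z) + \xi \le h_{\wt{\mathcal{C}}_{t,a^\dagger}}(z)$ for every $a\neq a^\dagger$, so feasibility is precisely the existence of some $z\in\mathbb{R}^d$ meeting all these inequalities.

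The first key step is to collapse the family of constraints into one. Using the standard identity $h_{\mathrm{Conv}(\cup_i C_i)} = \max_i h_{C_i}$, the conjunction over $a\neq a^\dagger$ is equivalent to the single inequality $h_U(z) + \xi \le h_{\wt{\mathcal{C}}_{t,a^\dagger}}(z)$, where $U := \mathrm{Conv}\big(\bigcup_{a\neq a^\dagger}\wt{\mathcal{C}}_{t,a}\big)$ is again compact and convex. The second step removes the role of $\xi$: since support functions are positively homogeneous, $h_{\wt{\mathcal{C}}_{t,a^\dagger}}(\lambda z) - h_U(\lambda z) = \lambda\big(h_{\wt{\mathcal{C}}_{t,a^\dagger}}(z) - h_U(z)\big)$ for $\lambda\ge 0$, and every $\lambda z$ is a legitimate context. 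Hence the existence of a feasible $z$ for some margin $\xi>0$ is equivalent to the existence of a single $z$ with the strict inequality $h_{\wt{\mathcal{C}}_{t,a^\dagger}}(z) > h_U(z)$, and rescaling then realizes any prescribed $\xi$.

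It remains to show that $\exists z:\, h_{\wt{\mathcal{C}}_{t,a^\dagger}}(z) > h_U(z)$ holds if and only if $\wt{\mathcal{C}}_{t,a^\dagger}\not\subseteq U$, which is exactly condition \eqref{eq:feasibilty_condition}. For the forward direction, take such a $z$ and let $\theta^\star\in\wt{\mathcal{C}}_{t,a^\dagger}$ attain the maximum (the set is compact); then $\langle z,\theta^\star\rangle = h_{\wt{\mathcal{C}}_{t,a^\dagger}}(z) > h_U(z) \ge \langle z,\theta'\rangle$ for all $\theta'\in U$, so $\theta^\star\notin U$ while $\theta^\star\in\wt{\mathcal{C}}_{t,a^\dagger}$. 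Conversely, given $\theta\in\wt{\mathcal{C}}_{t,a^\dagger}\setminus U$, the strict separating hyperplane theorem applies because $U$ is compact convex and disjoint from the point $\theta$, yielding a direction $z$ with $\langle z,\theta\rangle > \max_{\theta'\in U}\langle z,\theta'\rangle = h_U(z)$, whence $h_{\wt{\mathcal{C}}_{t,a^\dagger}}(z) \ge \langle z,\theta\rangle > h_U(z)$.

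I expect the only delicate points to be the bookkeeping around the convex hull: verifying the identity $h_{\mathrm{Conv}(\cup_i C_i)} = \max_i h_{C_i}$ and that $U$, as a convex hull of finitely many compact convex sets, is itself compact, so that the separation theorem applies in its strict form to a point disjoint from a compact convex set. Everything else---homogeneity, attainment of the maxima on the ellipsoids, and the separation argument---is routine convex analysis, and the reduction makes the geometric meaning of \eqref{eq:feasibilty_condition} transparent: the attack is feasible exactly when some plausible parameter for the target arm cannot be written as a convex combination of plausible parameters for the competing arms.
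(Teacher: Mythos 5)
Your proof is correct and follows essentially the same route as the paper's: the separating hyperplane theorem gives feasibility from condition \eqref{eq:feasibilty_condition} (with a rescaling to realize the margin $\xi$), and the fact that a linear functional's maximum over $\mathrm{Conv}\left(\bigcup_{a\neq a^\dagger}\wt{\mathcal{C}}_{t,a}\right)$ equals its maximum over the union gives the converse. Your support-function packaging is a cleaner presentation of the identical argument, and is slightly more careful than the paper about compactness of the hull and about how $z=x^\dagger+y$ enters the constraint.
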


The condition given by Theorem \ref{thm:feasibility_attack_one_user} says that this attack can be done when there exists a vector $x$ for which \changebrtwo{an arm in $A^{\dagger}$} is assumed to be optimal according to \linucb. The condition mainly stems from the fact that optimizing a linear product on a convex compact set will reach its maximum on the edge of this set. In our case this set is the convex hull of the confidence ellipsoids of \linucb. Although it is possible to use \otc{an} optimization algorithm for this class of non-convex problems\textemdash \eg DC programming~\cite{tuy1995dc}\textemdash they are still slow compared to convex algorithms. Therefore, we present a simple convex relaxation of the previous problem \changebrtwo{for a single target arm $a^\dagger \in A^\dagger$} that still enjoys some empirical performance compared to Problem \eqref{eq:attack_one_user}. \changebrtwo{The final attack can then be computed as the minimum of the attacks obtained for each $a^\dagger \in A^\dagger$.} The relaxed problem is the following \changebrtwo{for each $a^\dagger\in A^\dagger$}:
\begin{equation}\label{eq:relaxed_attack_one_user}
\begin{aligned}
\min_{y\in \mathbb{R}^{d}} \quad & \| y \|_2 \quad \quad \text{s.t }\quad &  \max_{a\neq a^{\dagger}, \changee{a\not\in A^{\dagger}}} \max_{\theta \in \mathcal{C}_{t,a}} \langle x^{\dagger} + y, \theta - \hat{\theta}_{a^{\dagger}}^t \rangle \leq -\xi
\end{aligned}
\end{equation}
Since the RHS of the constraint in Problem \eqref{eq:attack_one_user} can be written as $\max_{\theta\in\mathcal{C}_{t,a^{\dagger}}} \langle \theta, x^{\dagger} + y \rangle$ for any $y$, the relaxation here consists in using $\langle\theta, x^{\dagger}+y\rangle$ as a lower-bound to this maximum for any $\theta\in\mathcal{C}_{t,a^{\dagger}}$. 

For the relaxed Problem \eqref{eq:relaxed_attack_one_user}, the same type of reasoning as for Problem \eqref{eq:attack_one_user} gives that Problem \eqref{eq:relaxed_attack_one_user} is feasible if and only if $     \hat{\theta}_{a^{\dagger}}(t)\not\in \text{Conv}\left( \bigcup_{a\neq a^{\dagger}, \changee{a\not\in A^{\dagger}}} \mathcal{C}_{t,a}\right)$. 


If Condition \eqref{eq:feasibilty_condition} is not met, \changebrtwo{no arm $a^{\dagger} \in A^\dagger$} can be pulled by \linucb. Indeed, the proof of Theorem \ref{thm:feasibility_attack_one_user} shows that the upper-confidence of \changebrtwo{every arm in $A^{\dagger}$} is always dominated by another arm for any context. In other words, if \changebrtwo{any arm in $A^{\dagger}$} is optimal for some contexts then the condition is satisfied a linear number of times for \linucb (for formal proof of this fact see App.~\ref{app:condition_linear}).

 \subsection{Random Exploration Algorithm: \lints}

The previous subsection focused on \linucb, however we can obtain similar guarantees for  algorithms with random exploration such as \lints. In this case, it is not possible to guarantee that a specific arm will be pulled for a given context because of the randomness in the arm selection process. The objective is to guarantee that \changebrtwo{an arm from $A^{\dagger}$} is pulled with probability at least $1-\delta$.
Similarly to the previous subsection, the problem of the attacker can be written as: 
\begin{equation}\label{eq:TS_attack_one_user}
\begin{aligned}
\min_{y\in \mathbb{R}^{d}} \quad & \| y \| \quad \quad \text{s.t }\quad &  \mathbb{P}\left( \exists {a^\dagger\in A^\dagger},~\forall \changebrtwo{a\not\in A^{\dagger},~} \langle x^{\dagger} + y, \wt{\theta}_{a} - \wt{\theta}_{a^{\dagger}} \rangle \leq - \xi\right) \geq 1 - \delta
\end{aligned}
\end{equation}

where the $\wt{\theta}_{a}$ for different arms $a$ are independently drawn from a normal distribution with mean $\hat{\theta}_{a}(t)$ and covariance matrix $\upsilon^{2}\bar{V}_{a}^{-1}(t)$ with $\upsilon = \sigma\sqrt{9d\ln(T/\delta)}$. Solving this problem is not easy and in general not possible, \changebrtwo{even for a single arm}. For a given $x$ and arm $a$, the random variable $\langle x, \wt{\theta}_{a}\rangle$ is normally distributed with mean $\mu_{a}(x) := \langle \hat{\theta}_{a}(t), x\rangle$ and variance $\sigma_{a}^{2}(x) := \nu^{2}||x||_{\bar{V}_{a}^{-1}(t)}^{2}$. We can then write $\langle x,\wt{\theta}_{a}\rangle = \mu_{a}(x) + \sigma_{a}(x)Z_{a}$ with $(Z_{a})_{a}\sim\mathcal{N}(0, I_{K})$. For \changelm{the sake of} clarity, we drop the variable $x$ when writing $\mu_{a}(x)$ and $\sigma_{a}(x)$.

Let's imagine (just for this paragraph) that $A^{\dagger} = \{ a^{\dagger}\}$, then the constraint in Problem \eqref{eq:TS_attack_one_user} becomes $\changebrtwo{\left[1-\mathbb{E}_{Z_{a^{\dagger}}}\left( \Pi_{\changebrtwo{a\not\in A^{\dagger}}} \Phi\left( \frac{\sigma_{a^\dagger}Z_{a^{\dagger}}+\mu_{a^\dagger} - \mu_{a}}{\sigma_{a}}\right)\right)\right]\leq\delta}$ \cmmnt{ \todoeout{In fact, we can not wirte this because the events for $a^{\dagger}\in A^{\dagger}$ are not independent}\todol{yes true :/ }
\begin{align*}
\mathbb{E}_{Z_{a^{\dagger}}}\left( \Pi_{a\neq a^{\dagger}} \Phi\left( \frac{\sigma_{a^\dagger}Z_{a^{\dagger}}+\mu_{a^\dagger} - \mu_{a}}{\sigma_{a}}\right)\right)
 \geq 1 - \delta
\end{align*}}
 where $\Phi$ is the cumulative distribution function of a normally distributed Gaussian random variable. Unfortunately, computing exactly this expectation is an open problem.
 
In the more general case where $|A^{\dagger}|\geq 1$, rewriting the constraints of Problem~\eqref{eq:TS_attack_one_user} is not possible. Following the idea of \cite{liu2019data}, \changebrtwo{for every single target arm $a^\dagger\in A^\dagger$}, a possible relaxation of the constraint in Problem \eqref{eq:TS_attack_one_user} is, \changee{to ensure that there exists an arm $a^{\dagger}\in A^{\dagger}$ such that} for every arm $a\not\in A^\dagger$,    $ 1 - \Phi\left( (\mu_{a^{\dagger}} - \mu_{a} - \xi)/(\sqrt{\sigma_{a}^{2} + \sigma_{a^{\dagger}}^{2}})\right) \leq \frac{\delta}{\changelm{K-|A^\dagger|}}$, \changelm{where $|A
^\dagger|$ is the cardinal of  $A^\dagger$}.
Thus the relaxed version of the attack on \lints \changebrtwo{for a single arm $a^\dagger$} is:
\begin{align}
\min_{y\in \mathbb{R}^{d}} \| y \|  
 \quad \text{s.t.} \quad  \forall \changebrtwo{a\not\in A^{\dagger}},\langle x^{\dagger}+y, \hat{\theta}_{a^{\dagger}} - \hat{\theta}_{a}\rangle - \xi  \geq \nu\Phi^{-1}\left(1 - \tfrac{\delta}{\changelm{K-|A^\dagger|}}\right)\big\| x^{\dagger} + y \big\|_{\bar{V}_{a}^{-1} + \bar{V}_{a^{\dagger}}^{-1}} \label{eq:relaxed_TS_attack_one_user}
\end{align} 
Problem \eqref{eq:relaxed_TS_attack_one_user} is similar to Problem \eqref{eq:relaxed_attack_one_user} as the constraint is also a Second Order Cone Program but with different parameters (see App.~\ref{app:one_context_ts_linucb}). \changebrtwo{As in section \ref{sec:optimistic_algorithms}, we compute the final attack as the minimum of the attacks computed for each arm in $A^\dagger$.}

\vspace{-.1in}
\section{Experiments}\label{sec:experiments}
In this section, we conduct experiments on the attacks on contextual bandit problems with simulated data and two real-word datasets: MovieLens25M \cite{harper2015movielens} and Jester \cite{goldberg2001eigentaste}. The synthetic dataset and the data preprocessing step are presented in App.~\ref{app:experiments_setup}.
\vspace{-0.05in}
\subsection{Attacks on Rewards}
\vspace{-0.02in}
We study the impact of the reward attack for $4$ contextual algorithms: \linucb, \lints, \epsgreedy and \expfour. As parameters, \changebr{we use $L=1$ for the maximal norm of the contexts}, $\delta = 0.01$, $\upsilon = \sigma\sqrt{d\ln(t/\delta))/2}$, $\varepsilon_{t} = 1/\sqrt{t}$ at each time step $t$ and $\lambda = 0.1$. \changee{We choose only a \textit{unique target arm} $a^{\dagger}$.} For \expfour, we use $N = 10$ experts with $N-2$ experts returning a random arm at each time, one expert choosing arm $a^{\dagger}$ every time and one expert returning the optimal arm for every context. With this set of experts the regret of bandits with expert advice is the same as in the contextual case. To test the performance of each algorithm, we generate $40$ random contextual bandit problems and run each algorithm for $T = 10^{6}$ steps on each. We report the average cost and regret for each of the $40$ problems.  
Figure \ref{fig:costs_plot} (Top) shows the attacked algorithms using the attacked reward $\wt{r}^{1}$ (reported as ``stationary CACE'') and the rewards $\wt{r}^{2}$ (reported as CACE).

These experiments show that, even though the reward process is non-stationary,  usual stochastic algorithms like \linucb can still adapt to it and pull the optimal arm for this reward process (which is arm $a^{\dagger}$). The true regret of the attacked algorithms is linear as $a^{\dagger}$ is not optimal for all contexts. 
In the synthetic case, for the algorithms attacked with the rewards $\wt{r}^{2}$, over 1M iterations and $\gamma = 0.22$, the target arm is drawn more than $99.4\%$ of the time on average for every algorithm and more than $97.8\%$ of the time for the stationary attack $\wt{r}^1$ (see Table~\ref{table:number_of_draws} in App.~\ref{app:additional_fig_rwds}). The dataset-based environments (see Figure~\ref{fig:costs_plot} (Left)) exhibit the same behavior: the target arm is pulled more than $94.0\%$ of the time on average for all our attacks on Jester and MovieLens and more than $77.0\%$ of the time in the worst case (for \lints attacked with the stationary rewards) (see Table~\ref{table:number_of_draws}).

\vspace{-0.05in}
\subsection{Attacks on Contexts}\label{subsec:exp_attack_all_context}
\vspace{-0.02in}
\changee{We now illustrate the effectiveness of the attack in Alg.~\ref{alg:context_attack_protocol}.  We study the behavior of attacked \linucb, \lints, \epsgreedy with different size of target arms set ($|A^{\dagger}|/K\in \{ 0.3, 0.6, 0.9\}$ with $K$ the total number of arms).} We test the performance of \linucb with the same parameters as in the previous experiments. Yet since the variance is much smaller in this case, we generate a random problem and run $20$ simulations for each algorithm. \changee{The target arms are chosen randomly and we use the exact lower-bound on the reward of those arms to compute $\nu$.\cmmnt{In addition, to measure the robustness of random exploration algorithm like \lints and \epsgreedy we use an unbounded attack on the context}}

\begin{table}[h]
\begin{center}
\caption{Percentage of iterations for which the algorithm pulled an arm in the target set $A^{\dagger}$ (with a target set size of $0.3K$ arms) (\textbf{Left}) Online attacks using ContextualConic ($CC$) algorithm. Percentages are averaged over 20 runs of 1M iterations. (\textbf{Right}) Offline attacks with exact (Full) and Relaxed optimization problem. Percentages are averaged over 40 runs of 1M iterations. \label{table:success_rate}\vspace{-0.2cm}}
\scalebox{0.8}{
\begin{tabular}{lccc}
\toprule
{} & Synthetic &  Jester & Movilens \\
\midrule
\linucb          &      28.91\% &   26.59\% &    31.13\% \\
CC LinUCB     &     98.55\% &  98.36\% &   99.61\% \\
\epsgreedy      &     25.7\% &   25.85\% &    31.78\% \\
CC \epsgreedy &     89.71\% &  99.85\% &   99.92\% \\
\lints           &      27.2\% &   26.10\% &    33.24\% \\
CC \lints      &      30.93\% &  97.26\% &   98.82\% \\
\bottomrule
\end{tabular}
}
\scalebox{0.7}{
\begin{tabular}{lccc}
\toprule
{} & Synthetic &  Jester & MovieLens \\
\midrule
\linucb             &      $0.07\%$ &   $0.01\%$ &    $0.39\%$ \\
\linucb Relaxed     &     $13.76\%$ &  $97.81\%$ &    $4.09\%$ \\
\linucb Full        &     $88.30\%$ &  $99.98\%$ &   $99.99\%$ \\
\epsgreedy         &      $0.01\%$ &   $0.00\%$ &    $0.03\%$ \\
\epsgreedy Full &     $99.98\%$ &  $99.95\%$ &   $99.97\%$ \\
\lints              &      $0.02\%$ &   $0.01\%$ &    $0.05\%$ \\
\lints Relaxed      &     $18.21\%$ &  $80.48\%$ &    $5.56\%$ \\
\bottomrule
\end{tabular}
}

\end{center}
\end{table}

Table~\ref{table:success_rate} (Left) shows the percentage of times \changee{an arm in $A^{\dagger}$, for $|A^{\dagger}| = 0.3K$}, has been selected by the attacked algorithm\changebr{. We} see that, as expected, CC \linucb reaches a ratio of almost $1$, meaning the target arms are indeed pulled a linear number of times. A more surprising result (at least not covered by the theory) is that \epsgreedy exhibits the same behavior. \changebr{Similarly to \lints, \epsgreedy exhibits some randomness in the action selection process. It can cause an arm $a^{\dagger}\in A^{\dagger}$ to be chosen when the context is attacked and interfere with the principle of the attack.} We suspect that is what happens for \lints. Fig.~\ref{fig:costs_plot} (Bottom) shows the total cost of the attacks for the attacked algorithms \cmmnt{(except for \lints, for which the cost is linear)}. 
\changee{Despite the fact that the estimate of $\theta_{a^\dagger}$ can be polluted by attacked samples, it seems that \lints can still pick up $a^{\dagger}$ as being optimal for some dataset like MovieLens and Jester but not on the simulated dataset.}

\begin{figure}
    \centering
    \begin{minipage}{0.25\linewidth}
        \centering
        \includegraphics[width=0.85\linewidth]{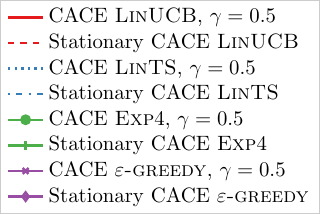}
    \end{minipage}\hfill
    \begin{minipage}{0.25\linewidth}
    \centering
    \includegraphics[width=0.95\linewidth]{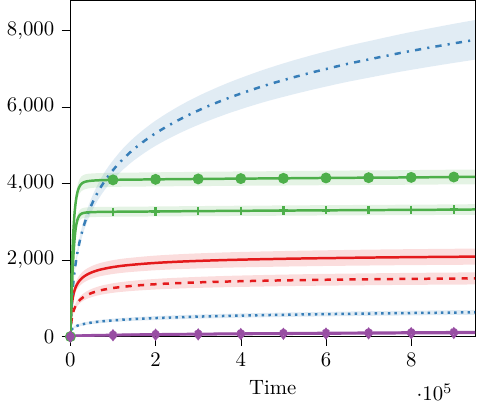}
    \end{minipage}\hfill
    \begin{minipage}{0.25\linewidth}
    \centering
    \includegraphics[width=0.85\linewidth]{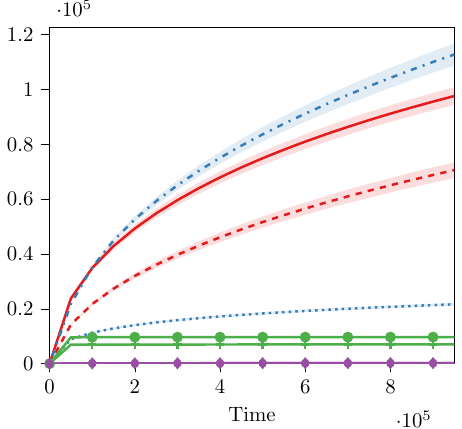}
    \end{minipage}\hfill
    \begin{minipage}{0.25\linewidth}
    \centering
    \includegraphics[width=0.85\linewidth]{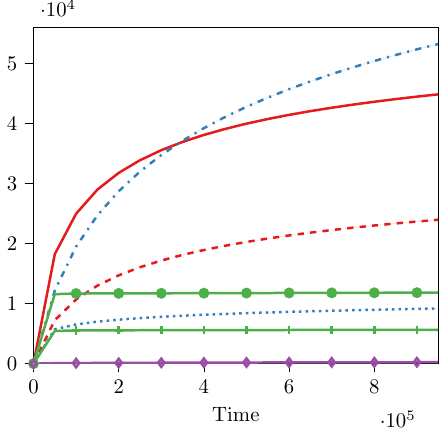}
    \end{minipage}\\
    \begin{minipage}{0.25\linewidth}
        \centering
        \includegraphics[width=0.85\linewidth]{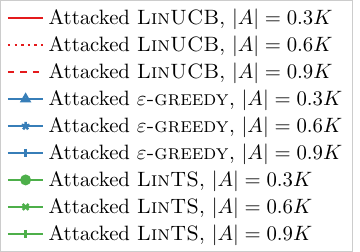}
    \end{minipage}\hfill
    \begin{minipage}{0.25\linewidth}
    \centering
    \includegraphics[width=0.95\linewidth]{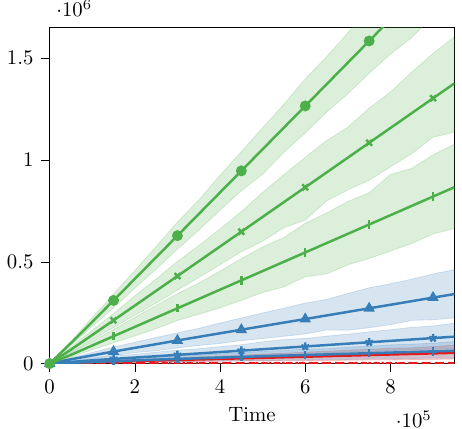}
    \end{minipage}\hfill
    \begin{minipage}{0.25\linewidth}
    \centering
    \includegraphics[width=0.85\linewidth]{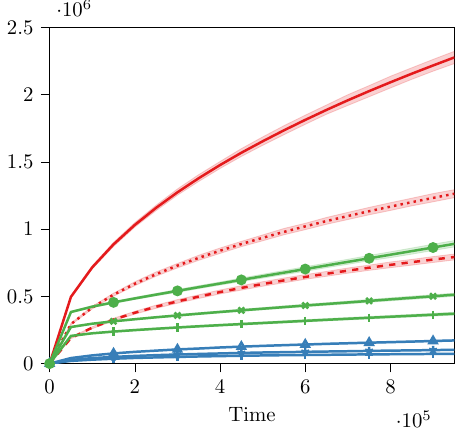}
    \end{minipage}\hfill
    \begin{minipage}{0.25\linewidth}
    \centering
    \includegraphics[width=0.85\linewidth]{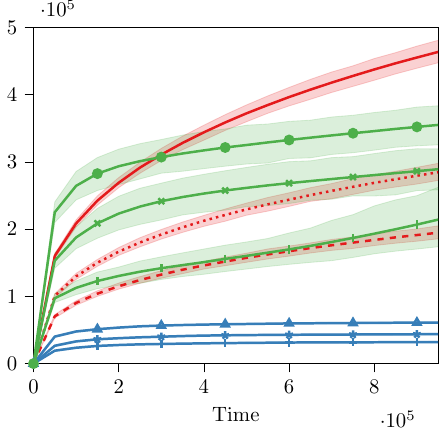}
    \end{minipage}
    \caption{Total cost of attacks on rewards for the synthetic (Left, $\gamma=0.22$), Jester (Center, $\gamma=0.5$) and MovieLens (Right, $\gamma=0.5$) environments. Bottom, total cost of ContextualConic attacks on the synthetic (Left), Jester (Center) and MovieLens (Right) environments.\vspace{-0.5cm}}
\label{fig:costs_plot}
\end{figure}

\vspace{-0.2cm}
\subsection{Offline attacks on a Single Context}
\vspace{-0.02in}
 We now move to the setting described in Sec.~\ref{sec:attack_one_context} and test the same algorithms as in Sec.~\ref{subsec:exp_attack_all_context}. We run 40 simulations for each algorithm \changebr{and each attack type}. The target context $x^{\dagger}$ is chosen randomly and the target arm as the arm minimizing the expected reward for $x^{\dagger}$. 
 The attacker is only able to modify the incoming context for the target context (which corresponds to the context of one user) and the incoming contexts are sampled uniformly from the set of all possible contexts (of size $100$). 
 Table~\ref{table:success_rate} (Right) shows the percentage of success for \changebr{each attack}. We observe that the non-relaxed attacks on \epsgreedy and \linucb work well across all datasets. 
 However, the relaxed attack 
 for \linucb and \lints are not as successful, on the synthetic dataset and MovieLens25M. 
 The Jester dataset seems to be particularly suited to this type of attacks because the true feature vectors are well separated from the convex hull formed by the feature vectors of the other arms: only $5$\% of Jester's feature vectors are within the convex hull of the others versus $8\%$ for MovieLens and $20\%$ for the synthetic dataset.
 As expected, the cost of the attacks is linear on all the datasets (see Figure \ref{fig:cost_attack_one_ctx} in App.~\ref{app:additional_fig_one_ctx}). The cost is also lower for the non-relaxed than for the relaxed version of the attack on \linucb. Unsurprisingly, the cost of the attacks on \lints is the highest 
due to the need to guarantee that 
$a^{\dagger}$ will be chosen with high probability (95\% in our experiments).

\section{Conclusion}
We presented several settings for online attacks on contextual bandits.
We showed that an attacker can force any contextual bandit algorithm to almost always pull an arbitrary target arm $a^{\dagger}$ with only sublinear modifications of the rewards. When the attacker can only modify the contexts, we prove that \linucb can still be attacked and made to almost always pull an arm in $A^{\dagger}$ by adding sublinear perturbations to the contexts. 
When the attacker can only attack a single context, we derive a feasibility condition for the attacks and we introduce a method to compute some attacks of small instantaneous cost for \linucb, \epsgreedy and \lints.
To the best of our knowledge, this paper is the first to describe effective attacks on the contexts of contextual bandit algorithms. Our numerical experiments, conducted on both synthetic and real-world data, validate our results and show that the attacks on all contexts are actually effective on several algorithms and with more permissible settings.

\newpage{}
\section*{Broader Impact}

Adversarial attacks have been a major concerns in the machine learning community for some time \cite{biggio2012poisoning,goodfellow2014explaining, jagielski2018manipulating, li2016data, liu2017robust} as they delve deeply into the robustness of such machine learning systems. Although, adversarial attacks have only been recently studied for bandits and reinforcement learning algorithms \cite{ma2018data,hussenot2019targeted}. Those settings are applied to a wide range of applications such as recommender systems or cooling down data centers \cite{lazic2018data}. 

In adversarial attacks on supervised algorithms and cryptography, it is well-accepted that the study and publication of attack schemes helps build trustful secure systems \cite{anish2018ObfuscatedGradients}. While there is a risk that our methods could be used by malicious attackers, we believe that they will also prompt some practitioners to ensure  such modifications of the rewards or contexts of their data can be detected or even prevented. 


\bibliographystyle{unsrt}
\bibliography{adv}

\newpage{}
\onecolumn
\appendix

\section{Proofs}
In this appendix, we present the proofs of different theoretical results presented in the paper.
\subsection{Proof of Proposition \ref{prop:reward_attack}}\label{app:proof_prop_rewd_attack}

\begin{prop*}
	For any $\delta\in(0, 1/K]$, when using Contextual ACE algorithm (Alg. ~\ref{alg:attacker_rewards}) with perturbed rewards $\tilde{r}^{1}$, with probability at least $1-K\delta$, algorithm $\mathfrak{A}$ pulls \changelm{an arm in $A^{\dagger}$} for $T - o(T)$ time steps and the total cost of attacks is $o(T)$.
\end{prop*}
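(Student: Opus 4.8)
The plan is to exploit the remark made just before the statement: although $\mathfrak{A}$ is fed a reward stream manufactured by the attacker, from $\mathfrak{A}$'s point of view this stream is generated by a perfectly legitimate \emph{stationary} stochastic contextual linear bandit, so the no-regret guarantee of $\mathfrak{A}$ applies verbatim — only now with respect to the attacker's fabricated means. Everything then reduces to comparing a regret lower bound (in terms of non-target pulls) with the $o(T)$ regret upper bound.

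First I would make the reduction precise. Under $\tilde r^{1}$, arm $a^{\dagger}$ keeps its true conditional mean $\langle\theta_{a^{\dagger}},x\rangle$ while every arm $a\neq a^{\dagger}$ has conditional mean $0$ (its reward being the fresh $\sigma$-subgaussian draw $\eta'_t$). This is a linear bandit with parameters $\theta'_{a^{\dagger}}=\theta_{a^{\dagger}}$ and $\theta'_a=0$ for $a\neq a^{\dagger}$, with bounded, subgaussian rewards. By Assumption~\ref{assumption1}, $\langle\theta_{a^{\dagger}},x\rangle\ge \nu_0:=\min_{x\in\mathcal{D}}\langle\theta_{a^{\dagger}},x\rangle>0$ for all admissible contexts, so $a^{\dagger}$ is the unique optimal arm for \emph{every} context of the perturbed problem. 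Hence the per-step regret of the perturbed problem vanishes when $a_t=a^{\dagger}$ and is at least $\nu_0$ otherwise, giving $R_T^{\mathrm{pert}}\ge \nu_0\,(T-N_{a^{\dagger}}(T))$. On the good event where the $K$ confidence sets are simultaneously valid (probability at least $1-K\delta$ by a union bound over the per-arm guarantee of Section~\ref{sec:preliminaries}), the no-regret property gives $R_T^{\mathrm{pert}}=o(T)$, whence $T-N_{a^{\dagger}}(T)\le R_T^{\mathrm{pert}}/\nu_0=o(T)$. This is the first claim.

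For the cost, I would note that $c_t=0$ whenever $a_t=a^{\dagger}$ (the true reward is passed through untouched), and for $a_t\neq a^{\dagger}$ bound $c_t=\big|\langle\theta_{a_t},x_t\rangle+\eta_t-\eta'_t\big|\le 1+|\eta_t|+|\eta'_t|$ using $\langle\theta_a,x\rangle\le 1$. Summing only over the $n:=T-N_{a^{\dagger}}(T)=o(T)$ non-target rounds gives $\sum_{t=1}^T c_t\le n+\sum_{a_t\neq a^{\dagger}}(|\eta_t|+|\eta'_t|)$. Since $a_t$ is measurable with respect to the perturbed history before $\eta_t,\eta'_t$ are drawn, the summands form a sub-exponential martingale-difference structure with constant conditional mean $O(\sigma)$; a Freedman/Bernstein concentration then yields $\sum_{a_t\neq a^{\dagger}}(|\eta_t|+|\eta'_t|)\le O(\sigma n)+O\big(\sigma\sqrt{n\log(1/\delta)}+\sigma\log(1/\delta)\log T\big)$ with high probability. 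As $n=o(T)$, each term is $o(T)$, so the total cost is $o(T)$.

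The main obstacle I anticipate is this third step rather than the conceptual reduction. A naive per-round tail bound $|\eta_t|\lesssim\sigma\sqrt{\log(T/\delta)}$ would turn the cost into $n\sqrt{\log T}$, which need \emph{not} be $o(T)$ from $n=o(T)$ alone; one must instead concentrate the \emph{sum} around its mean $O(\sigma n)$ so that the logarithmic factor enters only additively. A secondary, purely bookkeeping subtlety is arranging the probabilities so that the regret event and the noise-concentration event together retain probability at least $1-K\delta$.
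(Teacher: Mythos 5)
Your reduction is exactly the paper's: both arguments regard the stream $\tilde r^{1}$ as a genuine \emph{stationary} contextual linear bandit in which $a^{\dagger}$ keeps mean $\langle\theta_{a^{\dagger}},x\rangle$ and every other arm has mean $0$, invoke the $o(T)$ regret guarantee of $\mathfrak{A}$ on the good event of probability $1-K\delta$, and divide by the per-round gap $\min_{x\in\mathcal{D}}\langle\theta_{a^{\dagger}},x\rangle>0$ to conclude that the number of non-target pulls is $o(T)$. The only place you diverge is the cost step. The paper bounds the total cost by $\max_{a}\max_{x\in\mathcal{D}}\langle x,\theta_{a}\rangle\,(T-N_{a^{\dagger}}(T))$, which silently drops the noise contribution to $c_t=\big|\langle\theta_{a_t},x_t\rangle+\eta_t-\eta'_t\big|$; you instead retain the $|\eta_t|+|\eta'_t|$ terms and control their sum over the $n=T-N_{a^{\dagger}}(T)=o(T)$ non-target rounds by a Freedman-type concentration around the $O(\sigma n)$ conditional mean. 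Your version is the more rigorous of the two — you are right that a naive per-round tail bound would introduce a multiplicative $\sqrt{\log T}$ factor that $n=o(T)$ alone does not absorb, and the paper's own proof of Proposition~\ref{prop:rwd_attack_adv} handles the analogous noise terms only in expectation — at the price of one extra martingale concentration step and the associated bookkeeping in the failure probability, which the paper avoids by effectively treating the cost as if the noise were absent.
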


\begin{proof}
	Let us consider the contextual bandit problem $\mathcal{A}_{1}$, with $K$ arms with contexts $x\in \mathcal{D}$ such that every arm in \changelm{$a^\dagger\in A^\dagger$ has mean reward $\langle \theta_{a^{\dagger}}, x\rangle$} and all other arms has mean $0$. Then the regret of algorithm $\mathfrak{A}$ for this bandit problem is upper-bounded with probability at least $1 - \delta$ by a function $f_{\mathfrak{A}}(T)$ such that $f_{\mathfrak{A}}(T) = o(T)$. In addition, the reward process fed to Alg. $\mathfrak{A}$ by the attacker is a stationary reward process with $\sigma^{2}$-subgaussian noise. Therefore, the number of times algorithm $\mathfrak{A}$ pulls an arm \changelm{not in $A^{\dagger}$ is upper-bounded by \changee{$f_{\mathfrak{A}}(T)/\min_{x\in \mathcal{D}} \Delta(x)$ where for every context $x\in\mathcal{D}$, let $a
^{\dagger}_{\star}(x) := \arg\max_{a\in A^{\dagger}} \langle x, \theta_{a}\rangle$ and $\Delta(x) = \langle x, \theta_{a^{\dagger}_{\star}(x)}\rangle - \max_{a\in A^{\dagger}, a\neq a_{\star}^{\dagger}(x)} \langle x, \theta_{a}\rangle$}}. 

\changelm{In addition, the total cost of the attack is upper-bounded by $\max_{a\in \llbracket 1, K\rrbracket} \max_{x\in \mathcal{D}} |\langle x, \theta_{a}\rangle| (T - N_{A^{\dagger}}(T))$ where $N_{A^{\dagger}}(T)$ is the number of times an arm  in $A^{\dagger}$ has been pulled up to time $T$. Thanks to the previous argument, $T - N_{A^{\dagger}}(T) \leq  f_{\mathfrak{A}}(T)/\min_{x\in \mathcal{D}}\Delta(x)$.}
\end{proof}


\subsection{Proof of Proposition \ref{prop:cost_attack_all_ctx}}\label{app:proof_attack_all_ctx}

\begin{prop*}
Using the attack described in Alg.~\ref{alg:context_attack_protocol}, for any $\delta\in (0, 1/K]$, with probability at least $1 - K\delta$, the number of times \linucb does not pull \changelm{an arm in $A^{\dagger}$} is at most:
\begin{align*}
    \sum_{j\changelm{\notin A^{\dagger}}} N_{j}(T) \leq 32K^{2}\left( \frac{\lambda}{\alpha^{2}} + \sigma^{2}d\log\left(\frac{\lambda d + TL^2\alpha^{2}}{d\lambda\delta}\right) \right)^{3}
\end{align*}
with $N_{j}(T)$ the number of times arm $j$ has been pulled after $T$ steps, $|| \theta_{a}|| \leq S$ for all arms $a$, $\lambda$ the regularization parameter of \linucb and for all $x\in \mathcal{D}$, $||x||_{2}\leq L$. The total cost for the attacker is bounded by:
\begin{align*}
    \sum_{t=1}^{T} c_{t} \leq \frac{64K^{2}}{\nu}\left( \frac{\lambda}{\alpha^{2}} + \sigma^{2}d\log\left(\frac{\lambda d + TL^2\alpha^{2}}{d\lambda\delta}\right) \right)^{3}
\end{align*}
\end{prop*}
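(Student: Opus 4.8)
The plan is to show that, thanks to the dilation-invariance of \linucb's decision rule, the attack turns the problem faced by \linucb into a \emph{stationary} linear bandit in which $a^\dagger$ is uniformly optimal with a constant gap, and then to bound the number of non-target pulls by a standard elliptical-potential argument and convert that count into a cost bound.

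First I would establish that the attacker's arm prediction is exact. For any arm $a$ and context $x$, the \linucb score satisfies $\langle \alpha x, \wh\theta_a^t\rangle + \beta_{t,a}\|\alpha x\|_{\wt V_{t,a}^{-1}} = \alpha\big(\langle x, \wh\theta_a^t\rangle + \beta_{t,a}\|x\|_{\wt V_{t,a}^{-1}}\big)$, so multiplying a context by $\alpha>0$ scales every arm's UCB by the same factor and preserves the $\argmax$. Hence the arm $a_t$ that \linucb actually pulls on $\wt x_t$ equals the predicted $a_t'$, the dilation is applied exactly on the non-target rounds, and $c_t=(\alpha-1)\|x_t\|_2\le (\alpha-1)L$ on those rounds while $c_t=0$ otherwise. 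Next I would identify the \emph{effective} problem seen by \linucb: on a round where $a\neq a^\dagger$ is pulled, \linucb records the dilated context $z_i=\alpha x_i$ with the true reward $r_{i,a}=\langle\theta_a,x_i\rangle+\eta_i=\langle \phi_a, z_i\rangle+\eta_i$, where $\phi_a:=\theta_a/\alpha$. Thus $\wt V_{t,a}$ and $\wh\theta_a^t$ are the design matrix and ridge estimate of a standard regression with features $z_i$ (norm $\le \alpha L$), noise $\eta_i$, and true parameter $\phi_a$ (norm $\le S/\alpha \le S$), whereas for $a^\dagger$ the recorded contexts are the genuine $x_i$ and the true parameter is $\theta_{a^\dagger}$.

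Applying to each of these $K$ regressions the self-normalized concentration inequality that already guarantees $\theta_a\in\mathcal C_{t,a}$ (Sec.~\ref{sec:preliminaries}), a union bound yields an event of probability at least $1-K\delta$ on which $\phi_a\in\wt{\mathcal C}_{t,a}$ for all $a\neq a^\dagger$ and $\theta_{a^\dagger}\in\wt{\mathcal C}_{t,a^\dagger}$, for all $t$; the factor $\alpha L$ in the observed feature norm is exactly what produces the $\log\frac{\lambda d+TL^2\alpha^2}{d\lambda\delta}$ term. The choice $\alpha=2/\nu$ then gives a uniform gap: for every $x\in\mathcal D$, $\langle x,\phi_a\rangle=\langle x,\theta_a\rangle/\alpha\le 1/\alpha=\nu/2$ while $\langle x,\theta_{a^\dagger}\rangle\ge\nu$ by Asm.~\ref{assumption2}, so $\langle x,\theta_{a^\dagger}-\phi_a\rangle\ge\nu/2$. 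I would then derive a per-pull lower bound on the elliptical norm: dividing the pull condition $\max_{\theta\in\wt{\mathcal C}_{t,a}}\langle\alpha x_t,\theta\rangle\ge\max_{\theta\in\wt{\mathcal C}_{t,a^\dagger}}\langle\alpha x_t,\theta\rangle$ by $\alpha$, upper-bounding the left side via $\phi_a\in\wt{\mathcal C}_{t,a}$ by $\langle x_t,\phi_a\rangle+2\beta_{t,a}\|x_t\|_{\wt V_{t,a}^{-1}}$, and lower-bounding the right side by $\langle x_t,\theta_{a^\dagger}\rangle$ (since $\theta_{a^\dagger}\in\wt{\mathcal C}_{t,a^\dagger}$) gives $2\beta_{t,a}\|x_t\|_{\wt V_{t,a}^{-1}}\ge\nu/2$. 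Rewriting this through the feature $z_t=\alpha x_t$ actually appended to $\wt V_{\cdot,a}$ and using $\alpha\nu=2$ yields $\|z_t\|_{\wt V_{t,a}^{-1}}^2\ge 1/(4\beta_{t,a}^2)$ whenever a non-target arm $a$ is pulled.

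Finally I would turn this into a count. Using $\det\wt V_{t+1,a}=\det\wt V_{t,a}\,(1+\|z_t\|_{\wt V_{t,a}^{-1}}^2)$, the trace bound $\det\wt V_{T,a}\le (\lambda+N_a(T)\alpha^2L^2/d)^d$, and the monotonicity $\beta_{t,a}\le\beta_{T,a}$, the per-pull inequality gives $N_a(T)\,\log\!\big(1+\tfrac{1}{4\beta_{T,a}^2}\big)\le d\log\!\big(1+\tfrac{N_a(T)\alpha^2L^2}{\lambda d}\big)$, an \emph{implicit} inequality in $N_a(T)$ because $\beta_{T,a}^2=O\!\big(\tfrac{\lambda}{\alpha^2}+\sigma^2 d\log\frac{\lambda d+TL^2\alpha^2}{d\lambda\delta}\big)$ already contains $N_a(T)\le T$ inside its logarithm. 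Solving it (lower-bounding $\log(1+u)$ and absorbing logarithmic factors into polynomial slack), then summing over the $K-1$ non-target arms, yields the stated bound on $\sum_{j\neq a^\dagger}N_j(T)$; multiplying by the maximal instantaneous cost $(\alpha-1)L\le 2L/\nu$ gives $\sum_t c_t\le \tfrac{2L}{\nu}\sum_{j\neq a^\dagger}N_j(T)$. I expect the last step to be the main obstacle: the self-referential dependence of $\beta_{T,a}$ on $N_a(T)$ forces solving an implicit polynomial inequality, which is precisely what inflates the clean $O(\beta_{T,a}^2\, d\log)$ estimate into the cruder cube-and-$K^2$ form of the statement, and one must track the factor $\alpha$ consistently through both the feature norms and the confidence radii.
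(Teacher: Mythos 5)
Your proposal is correct and follows essentially the same route as the paper's proof: dilation-invariance of the \linucb score, the observation that the attacked regression for a non-target arm is a valid regression around the shifted parameter $\theta_a/\alpha$ (the paper phrases this equivalently as $\tilde\theta_a(t)=\hat\theta_a(t,\lambda/\alpha^2)/\alpha$), the confidence-set sandwich with $\alpha=2/\nu$ yielding $\|\alpha x_t\|_{\wt V_{t,a}^{-1}}\gtrsim 1/\beta_{t,a}$ on every non-target pull, and an elliptical-potential count multiplied by the per-round cost $(\alpha-1)\|x_t\|_2$. The only cosmetic difference is the final counting step (determinant telescoping versus the paper's Cauchy--Schwarz plus Lemma 11 of \cite{abbasi2011improved}), which are interchangeable and land on the same crude cubic bound.
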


\begin{proof}
Let $a_{t}$ be the arm pulled by \linucb at time $t$. For each arms $a$, let $\tilde{\theta}_a(t)$ be the result of the linear regression with the attacked context and $\hat{\theta}_{a}(t, \lambda/\alpha^{2})$ the one with the unattacked context and a regularization of $\frac{\lambda}{\alpha^{2}}$. At any time step $t$, we can write, for all $a\changebrtwo{\not\in A^\dagger}$:

\begin{align*}
\tilde{\theta}_a(t) &=  \left(\lambda I_d + \sum_{l=0, a_{l} = a}^{t} \alpha^{2} x_l x_l^{\intercal}\right)^{-1} \sum_{k=0, a_{k} = a}^{t} r_k \alpha x_{k} \\
&= \frac{1}{\alpha} \left(\frac{\lambda}{\alpha^2} I_d + \sum_{k=0, a_{k} = a}^t x_k x_k^{\intercal}\right)^{-1} \sum_{k=0, a_{k} = a}^t r_k x_k \\
&= \frac{\hat{\theta}_{a}(t,\lambda/\alpha^{2})}{\alpha}
\end{align*}
\changelm{We also note that, since the contexts are not modified for arms in  $a^\dagger\in A^\dagger$: $\tilde{\theta}_{a^\dagger}(t)=\hat{\theta}_{a^\dagger}(t,\lambda)$. In addition, for any context $x$ and arm $a\notin A^\dagger$, the exploration term used by \linucb becomes:}
\begin{align}
    ||x||_{\tilde{V}_{a,t}^{-1}}&= \frac{1}{\alpha} ||x||_{\hat{V}_{a,t}^{-1}}
\end{align}
where $\tilde{V}_{a,t} = \lambda I_d + \sum_{l=0, a_{l} = a}^{t} \alpha^{2} x_l x_l^{\intercal}$ and $\hat{V}_{a,t}^{-1} =\lambda/ \alpha^2 I_d + \sum_{k=0, a_{k} = a}^t x_k x_k^{\intercal}$. For a time $t$, if presented with context $x_{t}$ \linucb pulls arm \changelm{$a_{t} \notin A^{\dagger}$,} we have:
\begin{align*}
\alpha\left(\left\langle \hat{\theta}_{a^\dagger}(t), x_{t} \right\rangle +\beta_{a^\dagger}(t)||x_t||_{V_{a^\dagger,t}^{-1}}\right)\leq \left\langle \hat{\theta}_{a_{t}}(t, \lambda/\alpha^{2}), x_{t} \right\rangle +  \beta_{a_{t}}(t)||x_{t}||_{\hat{V}_{a_{t},t}^{-1}} 
\end{align*}

As \changelm{$\alpha = \frac2\nu\geq\min_{a^\dagger\in A^\dagger}\frac{2}{\left\langle \theta_{a^\dagger}, x_{t} \right\rangle}$}, we deduce that on the event that the confidence sets (Theorem $2$ in \cite{abbasi2011improved}) hold for arm $a^{\star}$: 
\begin{align*}
    2&\leq\left\langle \hat{\theta}_{a_{t}}(t, \lambda/\alpha^{2}), x_{t} \right\rangle +  \beta_{a_{t}}(t)||x_{t}||_{\hat{V}_{a_{t},t}^{-1}}\leq \langle\theta_{a_{t}}, x_{t}\rangle+2\beta_{a_{t}}(t)||x_{t}||_{\hat{V}_{a_{t},t}^{-1}}
\end{align*}
Thus, $1 \leq 2 - \langle\theta_{a_{t}}, x_{t}\rangle \leq 2\beta_{a_{t}}(t)||x_{t}||_{\hat{V}_{a_{t},t}^{-1}}$. Therefore,
\begin{align*}
    \sum_{t=1}^{T} \mathds{1}_{\{a_{t}\notin A^{\dagger}\}} &\leq \sum_{t=1}^{T} \min(2\beta_{a_{t}}(t)||x_{t}||_{\hat{V}_{a_{t},t}^{-1}},1)\mathds{1}_{\{a_{t} \notin A^{\dagger}\}}\\
    &\leq \sum_{j\notin A^{\dagger}} 2\beta_{j}(T)\sqrt{\sum_{t=1}^{T}\mathds{1}_{\{a_{t}=j\}}\sum_{t=1, a_{t}=j}^{T} \min(1, ||x_{t}||^{2}_{\hat{V}_{j,t}^{-1}})}&
  \end{align*}
  But using Lemma $11$ from \cite{abbasi2011improved} and the bound on the $\beta_{j}(T)$ for all arms $j$, we have with Jensen inequality:
  \begin{align*}
    \sum_{t=1}^{T} \mathds{1}_{\{a_{t}\notin A^{\dagger}\}} \leq &4\sqrt{K\sum_{t=1}^{T} \mathds{1}_{\{a_{t}\notin A^{\dagger}\}}d\log\left(1 + \frac{\alpha^2TL^2}{\lambda d}\right)}\\
    &\times\Big( \sqrt{\lambda/\alpha^{2}} S + \sigma\sqrt{2\log(1/\delta) + d\log(1 + \frac{\alpha^2TL^2}{\lambda d})}\Big)
\end{align*}
\end{proof}

\subsection{Proof of Theorem \ref{thm:feasibility_attack_one_user}}\label{app:feasibility_attack_one_user}

\begin{theorem*}
For any $\xi>0$, Problem \eqref{eq:attack_one_user} is feasible if and only if:
\begin{align}\label{eq:feasibilty_condition_bis}
\exists \theta \in  \changelm{\bigcup_{a^\dagger\in A^{\dagger}}}\mathcal{C}_{t, a^{\dagger}}, \qquad \theta\not\in \text{Conv}\left( \bigcup_{a\notin A^{\dagger}} \mathcal{C}_{t,a}\right)
\end{align}
	where for every arm $a$,  $\mathcal{C}_{t,a} := \big\{\theta \mid ||\theta - \hat{\theta}_{a}(t)||_{\tilde{V}_{a,t}} \leq \beta_{a}(t) \big\}$ with $\hat{\theta}_{a}(t)$ the least squares estimate for arm $a$ built by \linucb and 
	$$\tilde{V}_{a,t} = \lambda I_{d} + \sum_{l=1, x_{l}\neq x^{\dagger}}^{t} \mathds{1}_{\{a_{l} = a\}}x_{l}x_{l}^{\intercal} + \sum_{l=1, x_{l}= x^{\dagger}}^{t} \mathds{1}_{\{a_{l} = a\}}\tilde{x}_{l}\tilde{x}_{l}^{\intercal} $$ 
	the design matrix of \linucb at time $t$ for all arms $a$ (where $\tilde{x}_{l}$ is the modified context)
\end{theorem*}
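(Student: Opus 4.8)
The plan is to recast feasibility as a statement about support functions and then invoke a separation argument. Writing $z = x^\dagger + y$, note that as $y$ ranges over $\mathbb{R}^d$ so does $z$, hence the constraint set is nonempty for some $y$ if and only if it is nonempty for some $z\in\mathbb{R}^d$; the objective $\|y\|_2$ plays no role in feasibility. For a convex compact set $\mathcal{C}$ let $h_{\mathcal{C}}(z) := \max_{\theta\in\mathcal{C}}\langle z,\theta\rangle$ denote its support function. Since the constraint in Problem~\eqref{eq:attack_one_user} must hold for every $a\neq a^\dagger$, the binding constraint is obtained by taking the worst arm, so it reads $\max_{a\neq a^\dagger} h_{\mathcal{C}_{t,a}}(z) + \xi \leq h_{\mathcal{C}_{t,a^\dagger}}(z)$. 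Using that the maximum of finitely many support functions is the support function of the union, and that a support function is unchanged when a set is replaced by its convex hull, the left-hand maximum equals $h_{\mathcal{K}}(z)$ with $\mathcal{K} := \text{Conv}\big(\bigcup_{a\neq a^\dagger}\mathcal{C}_{t,a}\big)$. Thus feasibility is equivalent to the existence of $z$ with $h_{\mathcal{C}_{t,a^\dagger}}(z) - h_{\mathcal{K}}(z) \geq \xi$.

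Next I would remove the dependence on the particular value $\xi>0$. Support functions are positively homogeneous of degree one, so if $h_{\mathcal{C}_{t,a^\dagger}}(z_0) - h_{\mathcal{K}}(z_0) > 0$ for some $z_0$, then rescaling $z_0 \mapsto c z_0$ with $c>0$ makes the difference arbitrarily large and in particular $\geq \xi$; conversely a difference $\geq \xi > 0$ is in particular strictly positive. Hence, for every $\xi>0$, Problem~\eqref{eq:attack_one_user} is feasible if and only if there exists $z$ with $h_{\mathcal{C}_{t,a^\dagger}}(z) > h_{\mathcal{K}}(z)$.

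It then remains to show this last statement is equivalent to condition~\eqref{eq:feasibilty_condition_bis}, i.e.\ to $\mathcal{C}_{t,a^\dagger}\not\subseteq\mathcal{K}$. For the ``if'' direction, pick $\theta_0\in\mathcal{C}_{t,a^\dagger}$ with $\theta_0\notin\mathcal{K}$; since each $\mathcal{C}_{t,a}$ is a bounded ellipsoid (the weighting matrix $\tilde{V}_{a,t}\succeq \lambda I_d$ is positive definite) and the convex hull of a finite union of compact sets is compact, $\mathcal{K}$ is closed and convex, so the strict separating hyperplane theorem yields a $z$ with $\langle z,\theta_0\rangle > h_{\mathcal{K}}(z)$; then $h_{\mathcal{C}_{t,a^\dagger}}(z) \geq \langle z,\theta_0\rangle > h_{\mathcal{K}}(z)$, giving feasibility. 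For the ``only if'' direction, if $\mathcal{C}_{t,a^\dagger}\subseteq\mathcal{K}$ then monotonicity of the support function under set inclusion gives $h_{\mathcal{C}_{t,a^\dagger}}(z)\leq h_{\mathcal{K}}(z)$ for all $z$, so no separating $z$ exists and the problem is infeasible. I expect the point requiring the most care to be the reduction eliminating $\xi$ (the homogeneity argument) together with verifying that $\mathcal{K}$ is closed, since it is exactly closedness that licenses the strict separation theorem; the separation argument itself and the union/convex-hull identity for support functions are then routine.
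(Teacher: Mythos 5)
Your proof is correct and follows essentially the same route as the paper: the forward direction is the separating-hyperplane argument (your rescaling of $z$ by positive homogeneity plays the role of the paper's rescaling of the separating vector $v$ by $\xi/(c_2-c_1)$), and the converse is the same convexity argument, phrased via monotonicity of support functions instead of explicit convex combinations. Your version is slightly more careful than the paper's in verifying that $\mathcal{K}$ is compact so that strict separation applies, but the underlying argument is identical.
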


\begin{proof}
The proof of Theorem \ref{thm:feasibility_attack_one_user} is decomposed in two parts. 

First, let us assume that Equation \eqref{eq:feasibilty_condition_bis} is satisfied. Then, \changebrtwo{let us define $a^\dagger \in A^\dagger$ such that} $\theta \in \mathcal{C}_{t,a^{\dagger}}\setminus \text{Conv}\left( \bigcup_{a\notin A^{\dagger}} \mathcal{C}_{t,a}\right) $, then by the theorem of separation of convex sets applied to $\mathcal{C}_{t,a^{\dagger}}$ and $\{ \theta \}$. There exists a vector $v$ and $c_{1}< c_{2}$ such that for all $y \in \text{Conv}\left( \bigcup_{a\neq a^{\dagger}} \mathcal{C}_{t,a}\right)$:
\begin{align*}
\left\langle y, v\right\rangle \leq c_{1} < c_{2} \leq \left\langle \theta,v\right\rangle.
\end{align*}
Hence, for $\xi>0$ we have that for $\tilde{v} = \frac{\xi}{c_{2}-c_{1}} v$ that:
\begin{align*}
    \left\langle y, \tilde{v}\right\rangle + \xi \leq \left\langle \theta, \tilde{v} \right\rangle
\end{align*}
So the problem is feasible.

Secondly, let us assume that an attack is feasible. Then there exists a vector $y$ such that:
\begin{align}
    \changebrtwo{\max_{a^\dagger \in A^\dagger}}\max_{\theta\in \mathcal{C}_{t,a^{\dagger}}} \left\langle y, \theta\right\rangle > c_{1} := \max_{a\notin A^{\dagger}} \max_{\theta\in \mathcal{C}_{t,a}} \left\langle y, \theta\right\rangle
    \label{eq:feasible_in_proof}
\end{align}
\changelm{
	Let us reason by contradiction. We assume that $ \bigcup_{a\in A^{\dagger}}\mathcal{C}_{t,a^{\dagger}} \subset \text{Conv}\left( \bigcup_{a\notin A^{\dagger}} \mathcal{C}_{t,a}\right)$ and consider 
	\begin{align*}
	    \theta^*\in\bigcup_{a\in A^{\dagger}}\mathcal{C}_{t,a^{\dagger}}\text{ such that } \left\langle y, \theta^*\right\rangle=\max_{a^\dagger \in A^\dagger}\max_{\theta\in \mathcal{C}_{t,a^{\dagger}}} \left\langle y, \theta\right\rangle
	\end{align*}
	As we assumed $ \bigcup_{a\in A^{\dagger}}\mathcal{C}_{t,a^{\dagger}} \subset \text{Conv}\left( \bigcup_{a\notin A^{\dagger}} \mathcal{C}_{t,a}\right)$, there exists $n\in\mathbb{N}^{\star}$, $\lambda_{1},\cdots, \lambda_{n}\geq 0$ and $\theta_{1}, \cdots, \theta_{n}\in \bigcup_{a\notin A^{\dagger}} \mathcal{C}_{t,a}$ \text{such that}
	\begin{align*}
	    \theta^* = \sum_{i=1}^{n} \lambda_{i}\theta_{i}\text{ and } \sum_{i=1}^{n} \lambda_{i} = 1
	\end{align*}
	Thus
\begin{align}
    \left\langle y, \theta^*\right\rangle = \sum_{i} \lambda_{i} \left\langle y, \theta_{i} \right\rangle \leq c_{1}\sum_{i=1}^{n} \lambda_{i} = c_{1}\label{cdas}
\end{align}
\changebrtwo{We assumed that the problem is feasible, so $c_{1}<  
\left\langle y, \theta^*\right\rangle$ according to Eq.~\ref{eq:feasible_in_proof}. It} contradicts Eq. \ref{cdas}.
}
\end{proof}

\subsection{Condition of Sec.~\ref{sec:attack_one_context}}\label{app:condition_linear}
\begin{figure}[h]\label{fig:feasibility_condition}
\centering
    \includegraphics[width=0.5\linewidth]{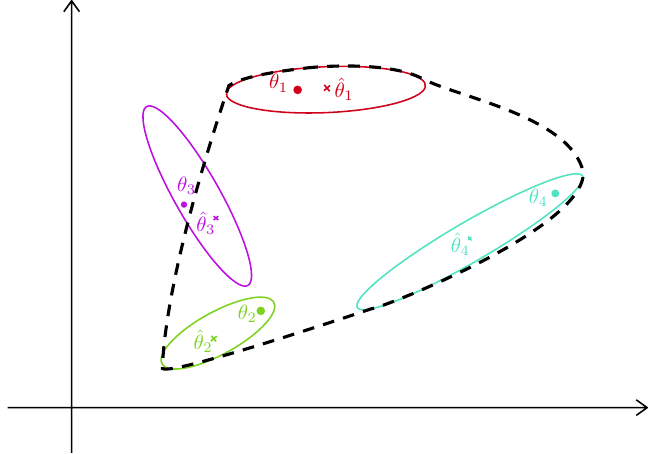}
	\caption{Illustrative example of condition \eqref{eq:feasibilty_condition}. The target arm is arm $3$ or $5$ and the dashed black line is the convex hull of the other confidence sets. The ellipsoids are the confidence sets $\mathcal{C}_{t,a}$ for each arm $a$. If we consider only arms $\{1,2,4,5\}$, and we use $5$ as the target arm, the condition \eqref{eq:feasibilty_condition} is satisfied as there is a $\theta$ outside the convex hull of the other confidence sets. On the other hand, if we consider arms $\{1,2,3,4\}$ and we use $3$ as the target arm, the condition is not satisfied anymore.}
\vspace{-.1in}
\end{figure}

\changebr{Let us assume that \changebrtwo{there is an arm in $a^\dagger\in A^\dagger$ which is} optimal for some contexts. More formally, there exists a subspace $V\subset \mathcal{D}$ such that:} 
\begin{equation*}    
\forall x\in V, \exists a^{\dagger}_{\star}(x)\in A^\dagger, \forall a\in \llbracket 1, K\rrbracket\setminus\{a^{\dagger}_{\star}(x)\} \qquad \langle x, \theta_{a^{\dagger}_{\star}(x)}\rangle > \left\langle x, \theta_{a}\right\rangle.
\end{equation*}
\changebr{We also assume that} the distribution of the contexts is such that, for all $t$, $\mu := \mathbb{P}\left(x_{t}\in V\right) >0$.
Then\changebr{,} the regret is lower-bounded in expectation by:
\begin{align*}
    \mathbb{E}(R_{T}) &= \mathbb{E}\left(\sum_{t=1}^{T} \mathds{1}_{\{x_{t}\in V\}}\big( \left\langle x_{t}, \theta_{a^{\dagger}_{\star}(x_{t})} - \theta_{a_{t}}\right\rangle\big)\right) \geq \mu m(T) \min_{x\in V} \max_{a\neq a^{\dagger}_\star(x)} \langle \theta_{a^{\dagger}_{\star}(x)} - \theta_{a}, x\rangle
\end{align*}
where $m(T)$ is the expected number of times $t\leq T$ such that condition \eqref{eq:feasibilty_condition} is not met. \changebr{\linucb guarantees that}  $\mathbb{E}(R_{T}) \leq \mathcal{O}(\sqrt{T})$ for every $T$. Hence, $m(T) \leq \mathcal{O}\left(\frac{\sqrt{T}}{\mu\min_{x\in V}\max_{a\neq a^{\dagger}_\star(x)} \langle \theta_{a^{\dagger}_{\star}(x)} - \theta_{a}, x\rangle}\right)$. This means that, in an unattacked problem, condition \eqref{eq:feasibilty_condition} is met $T - \mathcal{O}(\sqrt{T})$ times. On the other hand, when the algorithm is attacked the regret of \linucb is not sub-linear as the confidence bound for the target arm is not valid anymore. Hence we cannot provide the same type of guarantees for the attacked problem.

\section{Experiments}

\subsection{Datasets and preprocessing}\label{app:experiments_setup}

We present here the datasets used in the article and how we preprocess them for numerical experiments conducted in Section \ref{sec:experiments}.

We consider two types of experiments, one on synthetic data with a contextual MAB problems with $K = 10$ arms such that for every arm $a$, $\theta_{a}$ is drawn from a folded normal distribution in dimension $d = 30$. We also use a finite number of contexts ($10$), each of them is drawn from a folded normal distribution projected on the unit circle multiplied by a uniform radius variable (i.i.d. across all contexts). Finally, we scale the expected rewards in $(0,1]$ and the noise is drawn from a centered Gaussian distribution $\mathcal{N}(0, 0.01)$. 

The second type of experiments is conducted in the real-world datasets Jester \cite{goldberg2001eigentaste} and MovieLens25M \cite{harper2015movielens}. Jester consists of joke ratings on a continuous scale from $-10$ to $10$ for $100$ jokes from a total of $73421$ users. We use the features extracted via a low-rank matrix factorization ($d = 35$) to represent the actions (i.e., the jokes). We consider a complete subset of $40$ jokes and $19181$ users . Each user  rates all the $40$ jokes. At each time, a user is randomly selected from the $19181$ users and mean rewards are normalized in $[0, 1]$. The reward noise is $\mathcal{N} (0, 0.01)$. The second dataset we use is MovieLens25M. It contains $25000095$ ratings created by $162541$ users on $62423$ movies. We perform a low-rank matrix factorization to compute users features and movies features. We keep only movies with at least $1000$ ratings, which leave us with $162539$ users and $3794$ movies. At each time step, we present a random user, and the reward is the scalar product between the user feature and the recommend movie feature. All rewards are scaled to lie in $[0,1]$ and a Gaussian noise $\mathcal{N}(0, 0.01)$ is added to the rewards.

\subsection{Attacks on Rewards}\label{app:additional_fig_rwds}
In this appendix, we present empirical evolution of the total cost and the number of draws for a unique target arm as a function of the attack parameter $\gamma$ for the Contextual ACE attack with perturbed rewards $\tilde{r}^{2}$ on generated data.

\begin{figure}[htbp]
    \centering
    \subfigure[Total cost]{\includegraphics[width=0.35\textwidth]{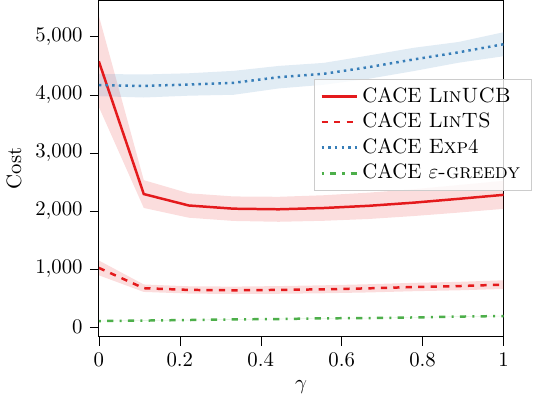}}
    \subfigure[Number of draws]{\includegraphics[width=0.3\textwidth]{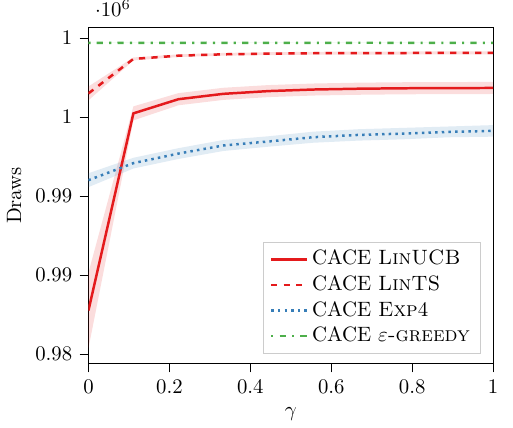}}
    \caption{Total cost of attacks and number of draws of the target arm at $T = 10^{6}$ as a function of $\gamma$ on synthetic data}
    \label{fig:synth_cost_draws_gamma}
\end{figure}

Fig.~\ref{fig:synth_cost_draws_gamma} (left) shows that the total cost of attacks seems to be quite invariant w.r.t.  $\gamma$ except when $\gamma \rightarrow 0$ because the difference between the target arm and the other becomes negligible. This is also depicted by the total number of draws (Fig.~\ref{fig:synth_cost_draws_gamma}, Right) as the number of draws plummets when $\gamma \rightarrow 0$.

\begin{table}
\begin{center}
	\caption{\label{table:number_of_draws}Number of draws of the target arm $a^{\dagger}$ at $T=10^{6}$, for the synthetic data, $\gamma = 0.22$ for the Contextual ACE algorithm and for the Jester and MovieLens datasets $\gamma = 0.5$\otc{.}}
\begin{tabular}{lccc}
\toprule
{} & Synthetic &  Jester & Movilens \\
\midrule
\linucb          &      $86, 731.6$ &  $23, 548.16$ &    $25, 017.31$ \\
CACE \linucb     &     $996, 238.6$ &  $921, 083.69$ &   $944, 721.28$ \\
Stationary CACE \linucb &     $995, 578.88$ & $862, 095.67$ &   $931, 531.6$ \\
\epsgreedy       &     $111, 380.44$ & $21, 911.54$ &    $3, 165.81$    \\
CACE \epsgreedy  &    $999, 812.92$ &  $999, 755.72$ &   $999, 776.82$ \\
Stationary CACE \epsgreedy &     $999, 806.32$ &  $999, 615.98$ &   $999, 316.76$ \\
\lints           &      $91, 664.8$ &  $23, 398.3$ &    $30, 189.84$ \\
CACE \lints      &      $998, 997.04$ &   $976, 708.9$ &   $990, 250.67$ \\
Stationary CACE \lints &     $977, 850.96$ & $784, 715.62$ &   $845, 512.98$ \\
\expfour         &     $93, 860.4$ &  $29, 147.01$ &    $17, 985.78$ \\
CACE \expfour    &    $992, 793.36$ &   $989, 214.36$ &    $936, 230.4$ \\
Stationary CACE \expfour &     $993, 673.24$ &  $988, 463.56$ &   $934, 304.23$ \\
\bottomrule
\end{tabular}

\end{center}
\end{table}

\subsection{Attacks on all Contexts}\label{app:additional_fig_all_ctx}



\begin{figure}[h]
   \begin{minipage}{0.25\linewidth}
        \centering
        \includegraphics[width=0.85\linewidth]{images/regret_cost_attacks_context/simulations/legend.pdf}
    \end{minipage}\hfill
    \begin{minipage}{0.25\linewidth}
    \centering
    \includegraphics[width=0.95\linewidth]{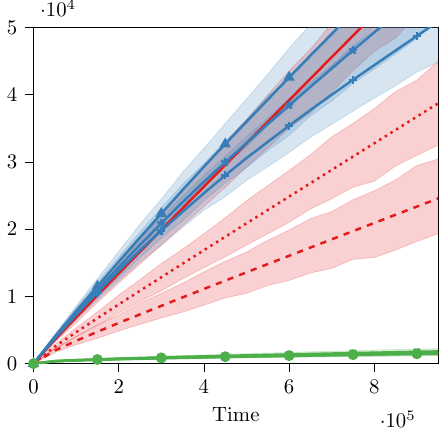}
    \end{minipage}\hfill
    \begin{minipage}{0.25\linewidth}
    \centering
    \includegraphics[width=0.85\linewidth]{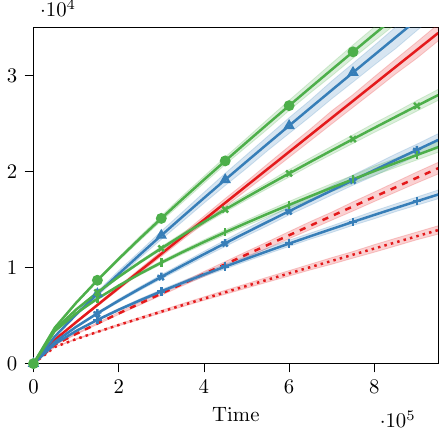}
    \end{minipage}\hfill
    \begin{minipage}{0.25\linewidth}
    \centering
    \includegraphics[width=0.85\linewidth]{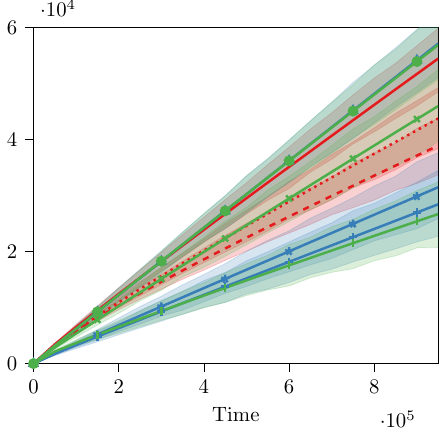}
    \end{minipage}
    \label{fig:regret_all_algs_attack_all_ctx}
\end{figure}

Fig.~\ref{fig:regret_all_algs_attack_all_ctx} shows the regret for all the attacks. This figure shows that even though the total cost of attacks is linear for algorithms like \lints in the synthetic dataset, the regret is linear. More generally, we observe that the regret is linear for all attacked algorithms on all datasets.

\subsection{Attack on a single context}\label{app:additional_fig_one_ctx}

The attacks are computed by solving the optimization problems \ref{eq:attack_one_user} and \ref{eq:relaxed_attack_one_user} (Sec.~\ref{sec:attack_one_context}). We choose the libraries according to their efficiency for each problem we need to solve. For Problem \eqref{eq:relaxed_attack_one_user} and Problem \eqref{eq:relaxed_TS_attack_one_user} we use \textsc{cvxpy}  \cite{cvxpy_rewriting} and the \textsc{ECOS} solver. 
For Problem \eqref{eq:attack_one_user} we use the \textsc{SLSQP} method from the Scipy optimize library \cite{scipy} to solve the full \linucb problem (Equation \ref{eq:attack_one_user}) and \textsc{quadprog} to solve the quadratic problem to attack \epsgreedy.

\begin{figure}[htbp]
    \centering
    \subfigure[Synthetic data]{\includegraphics[width=0.33\textwidth]{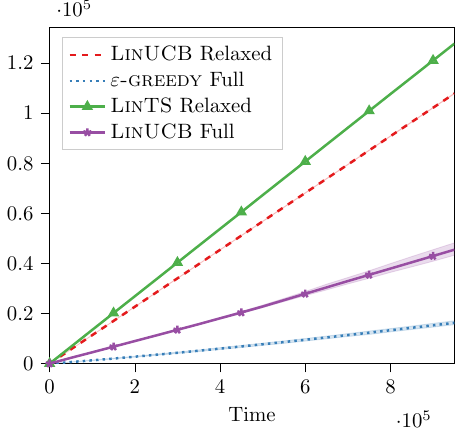}}\hfill
    \subfigure[Jester Dataset]{\includegraphics[width=0.33\textwidth]{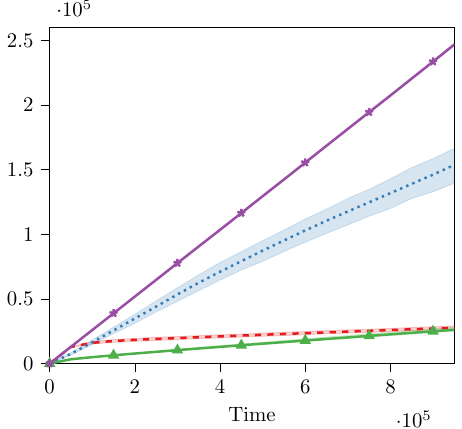}}\hfill
    \subfigure[MovieLens Dataset]{\includegraphics[width=0.33\textwidth]{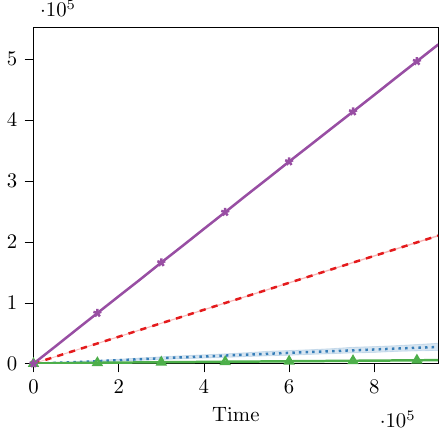}}
    \caption{Total cost of the attacks for the attacks one one context on our synthetic dataset, Jester and MovieLens. As expected, the total cost is linear.}
    \label{fig:cost_attack_one_ctx}
\end{figure}
\section{Problem \eqref{eq:relaxed_TS_attack_one_user} as a Second Order Cone (SOC) Program}\label{app:one_context_ts_linucb}
Problem \eqref{eq:relaxed_attack_one_user} and Problem \eqref{eq:relaxed_TS_attack_one_user} are both SOC programs. We can see the similarities between both problems as follows. Let us define for every arm \changee{$a\not\in A^{\dagger}$}, the ellipsoid:

$$\mathcal{C}_{t,a}^{'} := \Big\{ y \in \mathbb{R}^{d} \mid || y - \hat{\theta}_{a}(t)||_{A_{a}^{-1}(t)} \leq \upsilon\Phi^{-1}\left( 1 - \frac{\delta}{K-|A^{\dagger}|}\right)\Big\}$$

with $A_{a}(t) = \tilde{V}_{a}^{-1}(t) + \tilde{V}_{a^{\dagger}}^{-1}(t)$ with $\tilde{V}_{a}(t)$ and $\tilde{V}_{a^{\dagger}}(t)$ the design matrix built by \lints and $\hat{\theta}_{a}(t)$ the least squares estimate of $\theta_a$ at time $t$. Therefore for an arm $a$, the constraint in Problem \eqref{eq:relaxed_TS_attack_one_user} can be written for any $y\in\mathbb{R}^{d}$ and some arm $a^{\dagger}\in A^{\dagger}$ as:
\begin{align*}
    \left\langle x^{\star}+y, \hat{\theta}_{a^{\dagger}}(t)\right\rangle - \xi \geq \max_{z\in \mathcal{C}_{t,a}^{'}} \left\langle z, x^{\star} + y\right\rangle
\end{align*}
Indeed for any $x\in \mathbb{R}^{d}$,
\begin{align*}
    \max_{y\in \mathcal{C}_{t,a}^{'}} \left\langle y,x\right\rangle &= \left\langle x, \hat{\theta}_{a}(t)\right\rangle + \upsilon\Phi^{-1}\left( 1 - \frac{\delta}{K-|A^{\dagger}|}\right)\times\max_{ ||A_{a}^{-1/2}(t)u||_{2}\leq 1} \left\langle u, x\right\rangle\\
    &= \left\langle x, \hat{\theta}_{a}(t)\right\rangle + \upsilon\Phi^{-1}\left( 1 - \frac{\delta}{K-|A^{\dagger}|}\right)\max_{ ||z||_{2}\leq 1} \left\langle z, A_{a}^{1/2}(t)x\right\rangle \\
    &= \left\langle x, \hat{\theta}_{a}(t)\right\rangle + \upsilon\Phi^{-1}\left( 1 - \frac{\delta}{K-|A^{\dagger}|}\right)\lVert A_{a}^{1/2}(t)x\rVert_{2}
\end{align*}
Thus, the constraint is feasible if and only if:
\begin{align*}
    \hat{\theta}_{a^{\dagger}}(t) \not\in \text{Conv}\left( \bigcup_{a\not\in A^{\dagger}}  \mathcal{C}_{t,a}^{'}\right)
\end{align*}

\section{Attacks on Adversarial Bandits}\label{app:adversarial_rewards}
In the previous sections, we studied algorithms with sublinear regret $R_T$, \ie mainly bandit algorithms designed for stochastic stationary environments. Adversarial algorithms like \expfour do not provably \otc{enjoy} a \changee{sublinear \textbf{stochastic} regret $R_{T}$ (as defined in the introduction) \footnote{\expfour enjoys a sublinear hindsight regret though. Showing a sublinear upper-bound for the stochastic regret of \expfour is still an open problem (see Section $29.1$ in \cite{lattimore2018bandit})}}. In addition, because this type of algorithms are, by design, robust to non-stationary environments, one could expect them to induce a linear cost on the attacker. In this section, we show that this is not the case for most contextual adversarial algorithms. Contextual adversarial algorithms are studied through the reduction to the bandit with expert advice problem. This is a bandit problem with $K$ arms where at every step, $N$ experts suggest a probability distribution over the arms. The goal of the algorithm is to learn which expert gets the best expected reward in hindsight after $T$ steps. The regret in this type of problem is defined as $R_{T}^{\text{exp}} = \mathbb{E}\left( \max_{m\in \llbracket 1, N \rrbracket}\sum_{t=1}^{T} \sum_{j=1}^{K} E_{m,j}^{(t)}r_{t,j} - r_{t,a_{t}}\right)$
where $E_{m,j}^{(t)}$ is the probability of selecting arm $j$ for expert $m$. In the case of contextual adversarial bandit\changebr{s}, the experts first observe the context $x_{t}$ before recommending an expert $m$. 
Assuming the current setting with linear rewards, we can show that if an algorithm $\mathfrak{A}$, like \expfour, enjoys a sublinear regret $R_{T}^{\text{exp}}$, then, using the Contextual ACE attack with either $\tilde{r}^{1}$ or $\tilde{r}^{2}$, the attacker can fool the algorithm into pulling arm $a^{\dagger}$ a linear number of times under some \otc{mild} assumptions. However, attacking contexts for this type of algorithm is difficult because, even though the rewards are linear, the experts are not assumed to use a specific model for selecting an action.


\begin{prop}\label{prop:rwd_attack_adv}
	Suppose an adversarial algorithm $\mathfrak{A}$ satisfies a regret $R_{T}^{\exp}$ of order $o(T)$ for any bandit problem and that there exists an expert $m^{\star}$ such that $ T - \sum_{t=1}^{T} \mathbb{E}\left(E^{(t)}_{m^{\star}, a_{t,\star}^{\dagger}}\right) = o(T)$ with $a_{t,\star}^{\dagger}$ the optimal arim in $A^{\dagger}$ at time $t$. Then attacking alg. $\mathfrak{A}$ with Contextual ACE leads to pulling arm $a^{\dagger}$, $T-o(T)$ of times in expectation with a total cost of $o(T)$ for the attacker.
\end{prop}

\begin{proof}
Similarly to the proof of Proposition \ref{prop:reward_attack}, let's define the bandit with expert advice problem, $\mathcal{A}_{i}$, such that at each time $t$ the reward vector is $(\tilde{r}^{i}_{t,a})_{a}$ (with $i\in\{1, 2\}$). The regret of this algorithm is: $\Tilde{R}_{T}^{i,\text{exp}} = \mathbb{E}\left( \max_{m\in \llbracket 1, N \rrbracket}\sum_{t=1}^{T} E_{m}^{(t)}\Tilde{r}^i_{t} - \Tilde{r}^i_{t,a_{t}}\right)\in o(T)$. The regret of the learner is: $\mathbb{E}\left( \max_{m\in \llbracket 1, N \rrbracket}\sum_{t=1}^{T} E_{m}^{(t)}r_{t} - r_{t,a_{t}}\right)$ where $a_t$ are the actions taken by algorithm $\mathcal{A}_i$ to minimize $\Tilde{R}_{T}^{i,\text{exp}}$. Then we have:
\begin{align*}
    \Tilde{R}_{T}^{i,\text{exp}} \geq \mathbb{E}\left(\sum_{t=1}^{T}\sum_{j=1}^{K} (E_{m^{\star}, j}^{(t)} - \mathds{1}_{\{\changee{j = a_{t,\star}^{\dagger}}\}})\tilde{r}_{t,j}^{i} + \sum_{t=1}^{T} \tilde{r}^{i}_{t, a^{\dagger}_{t,\star}} - \tilde{r}^{i}_{t,a_{t}} \right)
\end{align*}
Therefore, 
\begin{align*}
  \mathbb{E}\left(\sum_{t=1}^{T} \tilde{r}^{i}_{t, a^{\dagger}_{t, \star}} - \tilde{r}^{i}_{t,a_{t}} \right) &\leq \Tilde{R}_{T}^{i,\text{exp}} + \mathbb{E}\left(\sum_{t=1}^{T}\sum_{j=1}^{K} (\mathds{1}_{\{\changee{j = a_{t,\star}^{\dagger}}\}} - E_{m^{\star}, j}^{(t)})\tilde{r}_{t,j}^{i}\right) \\
  &\leq \Tilde{R}_{T}^{i,\text{exp}} + \mathbb{E}\left(\sum_{t=1}^{T}(1 - E_{m^{\star}, a^{\dagger}_{t,\star}}^{(t)})\tilde{r}_{t,j}^{i}\right) \\
  &\leq \Tilde{R}_{T}^{i,\text{exp}} + \mathbb{E}\left(\sum_{t=1}^{T}(1 - E_{m^{\star}, a^{\dagger}_{t,\star}}^{(t)})\right)
\end{align*}

For strategy $i=1$, we have:
\begin{align*}
    \mathbb{E}\left(\sum_{t=1}^{T} \tilde{r}^{1}_{t, a^{\dagger}_{t, \star}} - \tilde{r}^{1}_{t,a_{t}} \right) &=
    \changee{\sum_{t=1}^{T} \mathbb{E}\left(r_{t,a_{t,\star}^{\dagger}} - \mathds{1}_{\{ a_{t}\in A^{\dagger}\}}\right)\geq \left(T-\mathbb{E}\left(\sum_{t=1}^{T} \mathds{1}_{\{ a_{t} = a_{t,\star}^{\dagger}\}}\right)\right)\Delta}
\end{align*}
\changee{where $\Delta := \min_{x\in \mathcal{D}}\left\{ \langle \theta_{a^\dagger(x)}, x\rangle - \max_{a\in A^{\dagger},a\neq a^{\dagger}(x)} \langle \theta_{a'}, x\rangle\right\}$ with $a^{\dagger}(x) := \arg\max_{a\in A^{\dagger}} \langle \theta_{a}, x\rangle$}. Then, as $\Tilde{R}_{T}^{1,\text{exp}}\in o(T)$ and $\mathbb{E}\left(\sum_{t=1}^{T}(1 - E_{m^{\star}, a^{\dagger}_{t, \star}}^{(t)})\right)\in o(T)$, we deduce that $\mathbb{E}(\sum_{t} \mathds{1}_{\{ a_{t} = a_{t,\star}
^{\dagger}\}}) = T-o(T)$. 

For strategy $i=2$, and $\delta>0$, let us denote by $E_{\delta}$ the event that all confidence intervals hold with probability $1 - \delta$. But on the event $E_{\delta}$, for a time $t$ where $a_{t}\neq a^{\dagger}_{t,\star}$ and such that $-1\leq C_{t,a_{t}} \leq 0$:
\begin{align*}
\tilde{r}^{2}_{t,a_{t}} = r_{t, a_{t}} + C_{t,a_{t}} &= (1 - \gamma)\min_{a^{\dagger}\in A^{\dagger}} \min_{\theta\in \mathcal{C}_{t,a^{\dagger}}} \langle \theta, x_{t}\rangle + \eta_{a_{t},t} + \langle\theta_{a}, x_{t}\rangle - \max_{\theta\in \mathcal{C}_{t,a_{t}}} \langle \theta, x_{t}\rangle \\
&\leq (1 - \gamma) \langle \theta_{a^{\dagger}_{t,\star}}, x_{t}\rangle + \eta_{a_{t},t}
\end{align*}
when $C_{t,a_{t}} >0$ (still on the event $E_{\delta}$):
\begin{align*}
\tilde{r}^{2}_{t,a_{t}} = r_{t,a_{t}} \leq (1 - \gamma) \langle \theta_{a^{\dagger}_{t,\star}}, x_{t}\rangle + \eta_{a_{t},t}
\end{align*}
because $C_{t,a_{t}}>0$ means that $(1 - \gamma) \langle \theta_{a^{\dagger}_{t,\star}}, x_{t}\rangle \geq (1 - \gamma)\min_{a^{\dagger}\in A^{\dagger}}\min_{\theta\in \mathcal{C}_{t,a^{\dagger}}} \langle \theta, x_{t}\rangle \geq \max_{\theta\in \mathcal{C}_{t,a_{t}}} \langle \theta, x_{t}\rangle \geq \langle \theta_{a}, x_{t}\rangle$. But finally, when $C_{t,a_{t}} \leq -1$, $\tilde{r}^{2}_{t,a_{t}} = r_{t,a_{t}} -1 \leq \eta_{a_{t},t} \leq (1- \gamma)\langle \theta_{a^{\dagger}_{t,\star}}, x_{t}\rangle + \eta_{a_{t},t}$.  But on the complementary event $E_{\delta}^{c}$,  $ \tilde{r}^{2}_{t,a_{t}} \leq r_{t,a_t}$. Thus, given that the expected reward is assumed to be bounded in $(0,1]$ (Assumption~\ref{assumption1}):
\begin{align*}
    \mathbb{E}\left(\sum_{t=1}^{T} \tilde{r}^{2}_{t, a^{\dagger}_{t,\star}} - \tilde{r}^{2}_{t,a_{t}} \right) & =  \mathbb{E}\left(\sum_{t=1}^{T} (r_{t, a^{\dagger}} - \tilde{r}^{2}_{t,a_{t}})\mathds{1}_{\{a_{t}\neq a^{\dagger}_{t,\star}\}} \right) \\
    &\geq \mathbb{E}\left(\sum_{t=1}^{T} \min\{\gamma\min_{x\in\mathcal{D}} \langle x, \theta_{a^{\dagger}_{t,\star}}\rangle, \Delta\} \mathds{1}_{\{a_{t}\neq a^{\dagger}_{t,\star}\}}\mathds{1}_{\{E_{\delta}\}}\right)-T\delta
\end{align*}
Finally, putting everything together we have:
\begin{align*}
    \mathbb{E}&\left(\sum_{t=1}^{T} \gamma\min_{x\in\mathcal{D}} \langle x, \theta_{a^{\dagger}_{t,\star}}\rangle \mathds{1}_{\{a_{t}\neq a^{\dagger}_{t,\star}\}}\right) \leq \Tilde{R}_{T}^{2,\text{exp}} + \mathbb{E}\left(\sum_{t=1}^{T}(1 - E_{m^{\star}, a^{\dagger}_{t,\star}}^{(t)})\right) + \delta T \left(\min\{\gamma\min_{a^{\dagger}\in A^{\dagger}}\min_{x\in\mathcal{D}} \langle x, \theta_{a^{\dagger}}\rangle, \Delta\} +1\right)
\end{align*}
Hence, because $\Tilde{R}_{T}^{1,\text{exp}} = o(T)$ and $\mathbb{E}\left(\sum_{t=1}^{T}(1 - E_{m^{\star}, a^{\dagger}}^{(t)})\right) = o(T)$ we have that for $\delta \leq 1/T$, the expected number of pulls of the optimal arm in $A^{\dagger}$ is of order $o(T)$. In addition, the cost for the attacker is bounded by: 
\begin{align*}
\mathbb{E}\left(\sum_{t=1}^{T} c_{t}\right) &= \mathbb{E}\left(\sum_{t=1}^{T} \mathds{1}_{\{a_{t}\neq a^{\dagger}_{t,\star}\}} \big|\max(-1, \min(C_{t, a_{t}},0))\big| \right)\leq  \mathbb{E}\left( \sum_{t=1}^{T} \mathds{1}_{\{a_{t}\neq a^{\dagger}_{t,\star}\}}\right)
\end{align*}
\end{proof}

The proof is similar to the one of Prop.~\ref{prop:reward_attack}. The condition on the expert in Prop.~\ref{prop:rwd_attack_adv} means that there exists an expert which believes an arm $a^{\dagger}\in A^{\dagger}$ is optimal most of the time. The adversarial algorithm will then learn that this expert is optimal. 
Algorithm \expfour has a regret $R_{T}^{\text{exp}}$ bounded by $\sqrt{2TK\log(N)}$, thus the total number of pulls of arms not in $A^{\dagger}$ 
is bounded by $\sqrt{2TK\log(M)}/\gamma$. This result also implies that for adversarial algorithms like \expthree \cite{auer2002finite}, the same type of attacks could be used to fool $\mathfrak{A}$ into pulling arms in $A^{\dagger}$ because the MAB problem can be seen as a reduction of the contextual bandit problem with a unique context and one expert for each arm.


\section{Contextual Bandit Algorithms}\label{app:algorithms}

In this appendix, we present the different bandit algorithms studied in this paper. All algorithms we consider except \expfour uses disjoint models for building estimate of the arm feature vectors $(\theta_{a})_{a\in\llbracket 1, K\rrbracket}$. Each algorithm (except \expfour) builds least squares estimates of the arm features.

\begin{algorithm}[h]
  \caption{Contextual \linucb}
  \label{alg:linucb}
\begin{algorithmic}
  \STATE {\bfseries Input:} regularization  $\lambda$, number of arms $K$, number of rounds $T$, bound on context norms: $L$, bound on norms $\theta_{a}$: $D$
  \STATE Initialize for every arm $a$, $\bar{V}_{a}^{-1}(t) = \frac1\lambda I_{d}$, $\hat{\theta}_{a}(t) = 0$ and $b_{a}(t) = 0$
  \FOR{$t=1,..., T$}
  \STATE Observe context $x_{t}$
  \STATE Compute $\beta_{a}(t) = \sigma\sqrt{d\log\left(\frac{1 +  N_{a}(t)L^{2}/\lambda}{\delta}\right)}$ with $N_{a}(t)$ the number of pulls of arm $a$
  \STATE Pull arm  $a_{t} =\argmax_a \langle \hat{\theta}_{a}(t),x_t\rangle + \beta_{a}(t)||x_{t}||_{\bar{V}_{a}^{-1}(t)}$
  \STATE Observe reward $r_{t}$ and update parameters $\hat{\theta}_{a}(t)$ and $\bar{V}_{a}^{-1}(t)$ such that:
  \begin{align*}
      \bar{V}_{a_{t}}(t+1) = \bar{V}_{a_{t}}(t) + x_{t}x_{t}^{\intercal},\quad b_{a_{t}}(t+1) = b_{a_{t}}(t) + r_{t}x_{t},\quad\theta_{a_{t}}(t+1) = \bar{V}_{a_{t}}^{-1}(t+1)b_{a_{t}}(t+1)
  \end{align*}
  \ENDFOR
\end{algorithmic}
\end{algorithm}

\begin{algorithm}[h]
  \caption{Linear Thompson Sampling with Gaussian prior}
  \label{alg:linTS}
\begin{algorithmic}
  \STATE {\bfseries Input:} regularization  $\lambda$, number of arms $K$, number of rounds $T$, variance $\upsilon$
  \STATE Initialize for every arm $a$, $\bar{V}_{a}^{-1}(t) = \lambda I_{d}$ and $\hat{\theta}_{a}(t) = 0$, $b_{a}(t) = 0$
  \FOR{$t=1,..., T$}
  \STATE Observe context $x_{t}$
  \STATE Draw $\tilde{\theta}_{a}\sim\mathcal{N}(\hat{\theta}_{a}(t), \upsilon^{2}\bar{V}_{a}^{-1}(t))$
  \STATE Pull arm $a_{t} = \argmax_{a\in \llbracket 1, K\rrbracket} \left\langle \tilde{\theta}_{a}, x_{t}\right\rangle$
  \STATE Observe reward $r_{t}$ and update parameters $\hat{\theta}_{a}(t)$ and $\bar{V}_{a}^{-1}(t)$
    \begin{align*}
      \bar{V}_{a_{t}}(t+1) = \bar{V}_{a_{t}}(t) + x_{t}x_{t}^{\intercal},\quad b_{a_{t}}(t+1) = b_{a_{t}}(t) + r_{t}x_{t},\quad\theta_{a_{t}}(t+1) = \bar{V}_{a_{t}}^{-1}(t+1)b_{a_{t}}(t+1)
  \end{align*}
  \ENDFOR
\end{algorithmic}
\end{algorithm}

\begin{algorithm}[h]
  \caption{\epsgreedy}
  \label{alg:egreedy}
\begin{algorithmic}
  \STATE {\bfseries Input:} regularization  $\lambda$, number of arms $K$, number of rounds $T$, exploration parameter $(\varepsilon)_{t}$
	\STATE Initialize, for all arms $a$, $\bar{V}_{a}^{-1}(t) = \lambda I_{d}$ and $\hat{\theta}_{a}(t) = 0$, $\varepsilon_{t} = 1$, $b_{a}(t) = 0$
  \FOR{$t=1,..., T$}
  \STATE Observe context $x_{t}$
  \STATE With probability $\varepsilon_{t}$, pull $a_{t} \sim \mathcal{U}\left(\llbracket 1,K\rrbracket\right)$, or pull $a_{t} = \argmax \langle \theta_{a}, x_{t}\rangle$ 
  \STATE Observe reward $r_{t}$ and update parameters $\hat{\theta}_{a}(t)$ and $\bar{V}_{a}^{-1}(t)$
    \begin{align*}
      &\bar{V}_{a_{t}}(t+1) = \bar{V}_{a_{t}}(t) + x_{t}x_{t}^{\intercal},\quad b_{a_{t}}(t+1) = b_{a_{t}}(t) + r_{t}x_{t},\\
      &\theta_{a_{t}}(t+1) = \bar{V}_{a_{t}}^{-1}(t+1)b_{a_{t}}(t+1)
  \end{align*}
  \ENDFOR
\end{algorithmic}
\end{algorithm}

\begin{algorithm}[h]
  \caption{\expfour}
  \label{alg:exp4}
\begin{algorithmic}
	\STATE {\bfseries Input:} number of arms $K$, experts: $(E_{m})_{m\in\llbracket 1, N\rrbracket}$, parameter $\eta$
  \STATE Set $Q_{1} = (1/N)_{j\in\llbracket 1, N\rrbracket}$
  \FOR{$t=1,..., T$}
  \STATE Observe context $x_{t}$ and probability recommendation $(E_{m}^{(t)})_{m\in\llbracket 1, N\rrbracket}$
  \STATE Pull arm $a_{t}\sim P_{t}$ where $P_{t,j} = \sum_{k=1}^{N} Q_{t,k}E_{j,k}^{(t)}$ 
  \STATE Observe reward $r_{t}$ and define for all arms $i$ $\hat{r}_{t,i} = 1 - \mathds{1}_{\{ a_{t}=i\}}( 1 - r_{t})/P_{t,i}$
  \STATE Define $\tilde{X}_{t,k} = \sum_{a} E_{k, a}^{(t)}\hat{r}_{t,a}$
  \STATE Update $Q_{t+1, j} = \exp(\eta Q_{t,i})/\sum_{j=1}^{N} \exp(\eta Q_{t,j})$ for all experts $i$
  \ENDFOR
\end{algorithmic}
\end{algorithm}

\section{Semi-Online Attacks}

\cite{liu2019data} studies what they call the offline setting for adversarial attacks on stochastic bandits. They consider a setting where a bandit algorithm is successively updated with mini-batch\changebr{es} of fixed size $B$. \changebr{The attacker can tamper with some of the incoming mini-batches. 
 More precisely, they can modify the context, the reward and even the arm that was pulled for any entry of the attacked mini-batches.}
\changebr{The main difference between this type of attacks and the online attacks we considered in the main paper is that we do not assume that we can attack from the start of the learning process: the bandit algorithm may have already converged by the time we attack}. 

We can still study the cumulative cost for the attacker to change the mini-batch in order to fool a bandit algorithm to pull a target arm $a^{\dagger}$ (\changee{here we take $A^{\dagger} = \{ a^{\dagger}\}$}). Contrarily to \cite{liu2019data}, we call this setting semi-online. We first study the impact of an attacker on \linucb where we show that, by modifying only $(K-1)d$ entries from the batch $\mathcal{B}$, the attacker can force \linucb to pull arm $a^{\dagger}$, $M'B - o(M'B)$ times with $M'$ the number of remaining batches updates. The cost of our attack is $\sqrt{MB}$ with $M$ the total number of batches.

\paragraph{Cost of an attack:} If presented with a mini-batch $\mathcal{B}$, with elements $(x_{t}, a_{t}, r_{t})$ composed of the context $x_{t}$ presented at time $t$, the action taken $a_{t}$ and the reward received $r_{t}$, the attacker modifies element $i$, namely  $(x^{i}_{t}, a^{i}_{t}, r^{i}_{t})$ into $(\tilde{x}^{i}_{t}, \tilde{a}^{i}_{t}, \tilde{r}^{i}_{t})$. The cost of doing so is $c^{i}_{t} = ||x^{i}_{t} - \tilde{x}^{i}_{t}||_{2} + \big|\tilde{r}^{i}_{t} - r^{i}_{t}\big| + \mathds{1}_{\{a^{i}_{t} \neq \tilde{a}^{i}_{t}\}}$ and the total cost for mini-batch $\mathcal{B}$ is defined as $c_{\mathcal{B}} = \sum_{i\in \mathcal{B}} c_{t}^{i}$. Finally, we consider the cumulative cost of the attack over $M$ different mini-batches $\mathcal{B}_{1}, \hdots, \mathcal{B}_{M}$, $\sum_{l=1}^{M} c_{\mathcal{B}_{l}}$. The interaction between the environment, the attacker and the learning algorithm is summarized in Alg.~\ref{alg:semi_online_setting}.  
\begin{algorithm}[h]
	\caption{Semi-Online Attack Setting.}
  \label{alg:semi_online_setting}
\begin{algorithmic}
  \STATE {\bfseries Input:} Bandit alg.~$\mathfrak{A}$, size of a mini-batch: $B$
  \STATE Set $t = 0$
  \WHILE{True}
  \STATE $\mathfrak{A}$ observe context $x_{t}$
  \STATE $\mathfrak{A}$ pulls arm $a_{t}$ and observes reward $r_{t}$
  \STATE Interaction $(x_{t}, a_{t}, r_{t})$ is saved in mini-batch $\mathcal{B}$
  \IF{$\big|\mathcal{B}\big| = B$}
  \STATE Attacker modifies mini-batch $\mathcal{B}$ into $\tilde{\mathcal{B}}$
  \STATE Update alg.~$\mathfrak{A}$ with poisoned mini-batch $\tilde{\mathcal{B}}$
  \ENDIF
  \ENDWHILE
\end{algorithmic}
\end{algorithm}


The attack presented here is based on the Ahlberg–Nilson–Varah bound \cite{varah1975lower}, which gives a control on the sup norm of a matrix with dominant diagonal elements. More precisely, when presented with a mini-batch $\mathcal{B}$, the attacker needs to modify the contexts and the rewards. We assume that the attacker knows the number of mini-batch updates $M$ and has access to a lower-bound on the reward of the target arm, $\nu$ as in Assumption~\ref{assumption2}. 

The attacker changes $(K-1)\times d$ rows of the first mini-batch to rewards of $0$ with a context $\delta_a e_i$ for each arm $a \neq a^{\dagger}$ with $(e_i)$ the canonical basis of $\mathbb{R}^{d}$. Moreover, $\delta_{a}$ is chosen such that: 
\begin{equation}
    \delta_{a}  > \max\left(\sqrt{\frac{2MBL^{2}d}{\nu} + dMB},  \sqrt{\frac{4\beta_{max}^2L^{2}d}{\nu^{2}} + dMB}\right)
    \label{eq:delta_botnets}
\end{equation}
with $\beta_{max} = \max_{t=0}^{MB} \beta_a(t)$ and $M$ the number of mini-batch updates.

\begin{prop}\label{prop:attacker_can_choose}
After the first attack, with probability $1-\delta$, \linucb always pulls arm $a^{\dagger}$, 
\end{prop}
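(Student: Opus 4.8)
The plan is to show that poisoning the first mini-batch drives the optimistic index of every non-target arm $a\neq a^\dagger$ strictly below the lower bound $\nu$ on the index of $a^\dagger$, \emph{uniformly over all admissible contexts}, and that this domination is self-sustaining. First I would compute the effect of the attack on the statistics of a non-target arm. For each $a\neq a^\dagger$ the attacker injects the $d$ rows $(\delta_a e_i,0)_{i=1}^d$; their contexts contribute exactly $\delta_a^2 I_d$ to the Gram matrix and, since their rewards are $0$, nothing to $b_a$. Hence after the first update $\bar{V}_{a}(t)=(\lambda+\delta_a^2)I_d + G_a$ and $b_a = \sum_k r_k x_k$, where $G_a=\sum_k x_k x_k^\intercal$ collects the at most $MB$ genuine samples of arm $a$, and $\hat\theta_a(t)=\bar{V}_{a}^{-1}(t)b_a$. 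The point of the large diagonal loading $\delta_a^2 I_d$ is to make $\bar{V}_{a}(t)$ \emph{strictly diagonally dominant}: using $\|x_k\|_2\le L$ the off-diagonal mass of $G_a$ in any row is of order $dMBL^2$, so the common additive term in the two thresholds on $\delta_a$ guarantees dominance and the Ahlberg--Nilson--Varah inequality \cite{varah1975lower} then yields $\|\bar{V}_{a}^{-1}(t)\|_\infty\le 1/\rho$ with $\rho$ of order $\delta_a^2$.

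Next I would convert this conditioning estimate into a uniform bound on the index $\langle\hat\theta_a(t),x\rangle+\beta_a(t)\|x\|_{\bar{V}_{a}^{-1}(t)}$ by splitting the target $\nu$ into two halves. For the mean term, Hölder gives $|\langle\hat\theta_a(t),x\rangle|\le\|x\|_\infty\|\hat\theta_a(t)\|_1\le L\,d\,\|\bar{V}_{a}^{-1}(t)\|_\infty\|b_a\|_\infty$; since $\|b_a\|_\infty$ scales like $MBL$ (rewards bounded by $1$ up to the high-probability noise event), this is $\le\nu/2$ precisely when $\delta_a^2\gtrsim 2MBL^2 d/\nu$, the first term of the first threshold. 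For the exploration bonus, $\|x\|_{\bar{V}_{a}^{-1}(t)}^2=x^\intercal\bar{V}_{a}^{-1}(t)x\le\|x\|_\infty\,d\,\|\bar{V}_{a}^{-1}(t)\|_\infty\|x\|_\infty\le dL^2/\rho$, so with $\beta_a(t)\le\beta_{\max}$ the bonus is $\le\nu/2$ exactly when $\delta_a^2\gtrsim 4\beta_{\max}^2 L^2 d/\nu^2$, the first term of the second threshold. Adding the two halves yields $\mathrm{UCB}_a(x)<\nu$ for every $x\in\mathcal{D}$ and every $a\neq a^\dagger$.

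For the target arm I would use that its data is never touched by the attacker (the injected rows are all assigned to arms $a\neq a^\dagger$), so by Theorem~2 of \cite{abbasi2011improved} the self-normalized confidence set contains $\theta_{a^\dagger}$ at all times with probability $1-\delta$; this holds even if $a^\dagger$ has no genuine sample, since the prior ellipsoid already contains $\theta_{a^\dagger}$ because $S\sqrt\lambda\le\beta_{a^\dagger}(t)$. Optimism then gives $\mathrm{UCB}_{a^\dagger}(x)\ge\langle x,\theta_{a^\dagger}\rangle\ge\nu$ by Assumption~\ref{assumption2}, so $a^\dagger$ strictly dominates every other arm at every context. I would close by induction: because $a^\dagger$ is pulled at every round after the first update, no new genuine sample is ever added to any $a\neq a^\dagger$, hence $\bar{V}_{a}(t)$, $b_a$ and the bound $\mathrm{UCB}_a(x)<\nu$ remain frozen, while $\mathrm{UCB}_{a^\dagger}(x)\ge\nu$ keeps holding on the same probability-$(1-\delta)$ event.

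I expect the main obstacle to be the Varah step: obtaining a clean, context-uniform control of $\|\bar{V}_{a}^{-1}(t)\|_\infty$ for the poisoned design matrix and propagating it simultaneously through the mean estimate and the exploration bonus, with constants matching the two stated thresholds on $\delta_a$ (in particular, the $\|\cdot\|_\infty$-based Hölder bounds are what produce the full factor $d$ rather than $\sqrt d$). By comparison, the target-arm lower bound and the persistence induction are routine.
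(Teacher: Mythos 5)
Your proposal follows essentially the same route as the paper's proof: inject $d$ zero-reward rows $\delta_a e_i$ per non-target arm so the design matrix becomes strictly diagonally dominant, invoke the Ahlberg--Nilson--Varah bound to get $\|V_{t,a}^{-1}\|_\infty \leq 1/(\delta_a^2 - dMB)$, and then bound the mean term and the exploration bonus separately so that each of the two thresholds on $\delta_a$ forces the corresponding piece below $\nu/2$, while the untouched target arm's valid confidence set gives $\mathrm{UCB}_{a^\dagger}(x)\geq\nu$ with probability $1-\delta$. Your explicit persistence induction and the remark about the prior ellipsoid covering the case where $a^\dagger$ has no genuine samples are small refinements the paper leaves implicit, but the argument and the constants are the same.
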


\begin{proof}
After having poisoned the first mini-batch $\mathcal{B}$, the latter can be partitioned into two subsets, $\mathcal{B}_{c}$ (with non-perturbed rows) and $\mathcal{B}_{nc}$ (with the poisoned rows). The design matrix of arm $a\neq a^{\dagger}$ for every time $t$ after the poisoning is:
\begin{align}
    V_{t,a} = \lambda I_d +  \sum_{l=1, a_{l} = a}^{t} x_{l}x_{l}^{\intercal} + \delta_{a}^2 \sum_{i=1}^{d} e_i e_i^{\intercal}
\end{align}
For every time $t$, non diagonal elements of $V_{t,a} = (v_{i,j})_{i,j}$ are bounded by: 
\begin{align}\label{non_diagonal_element}
    \forall i, r_i &:= \sum_{j \neq i} v_{i,j}\leq \sum_{j \neq i} \sum_{l=1, a_{l} = a}^{t} \lVert x_l x_{l}^{\intercal}\rVert_\infty\leq dN_{a}(kB)
\end{align}
Whereas for all diagonal elements, $v_{i,i} \geq \delta_{a}^2 > r_i$. Thus $V_{t,a}$ is strictly diagonal dominant and by the Ahlberg–Nilson–Varah bound \cite{varah1975lower}:
\begin{align}\label{eq:bound_norm_design_matrix}
\lVert V_{t,a}^{-1} \rVert_\infty &\leq \frac{1}{\min_{i} \left(\lVert v_{i,i}\rVert - r_i\right)} \leq \frac{1}{\delta_{a}^{2} - dMB}
\end{align}
Then, for every arm $a \neq a^{\dagger}$ and any context $x\in \mathcal{D}$ and any time $t$ after the attack:
\begin{align*}
    \langle\hat{\theta}_{a,t}, x\rangle + \beta_{a}(t) ||x||_{V_{t,a}^{-1}} & \leq  \sum_{l=1, a_{l} = a}^{t} r_t (V_{t,a}^{-1} x_t)^{\intercal} x + \beta_{a}(t)||x||_{1}\sqrt{\lVert V_{t,a}^{-1} \rVert_\infty}\\
    & \leq  \lVert V_{t,a}^{-1} \rVert_\infty dN_{t}(a) \sup_{y\in\mathcal{D}} ||y||_{2}^{2} + \beta_{\text{max}}\sqrt{d}\sup_{y\in\mathcal{D}} ||y||_{2}\sqrt{\lVert V_{t,a}^{-1} \rVert_\infty} < \nu
\end{align*}

    
We have shown that for any arm $a \neq a^{\dagger}$ and any time step $t$ after the attack, the upper confidence bound computed by \linucb is upper-bounded bu $\nu$ the arm $a^{\dagger}$. Then, with probability $1-\delta$, the confidence set for arm $a^{\dagger}$ holds and, for all $x \in \mathcal{D}$, arm $a^{\dagger}$ is chosen by \linucb. The total cost of this attack is $d \sum_{a\neq a^{\dagger}} \delta_{a} L = O(\sqrt{MB})$
\end{proof}

\end{document}